\newcommand{\LB}[1]{\underline{#1}}
\newcommand{\UB}[1]{\overline{#1}}
\newcommand{\condOne}{(C1)}
\newcommand{\condTwo}{(C2)}
\newcommand{\RDIS}[1]{RDIS$_{#1}$}
\newtheorem{define}{Definition}
\newtheorem{prop}{Proposition}
\newcommand{\x}{\mathbf{x}}
\newcommand{\mb}[1]{\mathbf{#1}}
\newcommand{\idx}{\mathcal{I}}
\newcommand{\tuple}[1]{\ensuremath{\left \langle #1 \right \rangle }}
\newcommand{\tuplesm}[1]{\ensuremath{ \langle #1 \rangle }}
\newcommand{\citeAY}[1]{\citeauthor{#1}~[\citeyear{#1}]}
\newcommand{\citeNP}[1]{~\citeauthor{#1}~[\citeyear{#1}]}
\title{Recursive Decomposition for Nonconvex Optimization}
\author{Abram L. Friesen and Pedro Domingos \\
Department of Computer Science and Engineering \\
University of Washington\\
Seattle, WA 98195, USA \\
\normalsize{\texttt{\{afriesen,pedrod\}@cs.washington.edu}}}
\begin{document}

\maketitle 

%Place all illustrations (figures, drawings, tables, and photographs)
%throughout the paper at the places where they are first discussed,
%rather than at the end of the paper. If placed at the bottom or top of
%a page, illustrations may run across both columns.
%
%Number illustrations sequentially. Use references of the following
%form: Figure 1, Table 2, etc. 

\begin{abstract}
Continuous optimization is an important problem in many areas of AI,
including vision, robotics, probabilistic inference, and machine
learning. Unfortunately, most real-world optimization problems are
nonconvex, causing standard convex techniques to find only local
optima, even with extensions like random restarts and simulated
annealing. We observe that, in many cases, the local modes of the
objective function have combinatorial structure, and thus ideas from
combinatorial optimization can be brought to bear. Based on this, we
propose a problem-decomposition approach to nonconvex optimization.
Similarly to DPLL-style SAT solvers and recursive conditioning in
probabilistic inference, our algorithm, RDIS, recursively sets variables
so as to simplify and decompose the objective function into
approximately independent sub-functions, until the remaining functions
are simple enough to be optimized by standard techniques like gradient
descent. The variables to set are chosen by graph partitioning,
ensuring decomposition whenever possible. We show analytically that
RDIS can solve a broad class of nonconvex optimization problems
exponentially faster than gradient descent with random restarts. 
Experimentally, 
RDIS outperforms standard techniques
on problems like structure from motion and protein folding.
\end{abstract}

\section{Introduction}

AI systems that interact with the real world often have to solve continuous
optimization problems. For convex problems, which have no local optima,
many sophisticated algorithms exist. However, most continuous optimization
problems in AI and related fields are nonconvex, and often have an exponential
number of local optima. For these problems, the standard solution is to
apply convex optimizers with multi-start and other randomization
techniques~\cite{fabio1991}, but in problems with an exponential number of
optima these typically fail to find the global optimum in a
reasonable amount of time. %~\cite{neumaier2005}.
Branch and bound methods can also be used, but
scale poorly due to the curse of dimensionality~\cite{neumaier2005}.
%Global methods based on branch-and-bound
%attempt to explore the complete space, meaning that problems 
%with hundreds of variables already present significant difficulties~\cite{neumaier2005}.

In this paper we propose that such problems can instead be approached using
problem decomposition techniques, which have a long and successful history
in AI for solving discrete problems
(e.g,~\cite{davis1962machine,darwiche01recursive,bayardo00,sang2004combining,sang2005performing,Bacchus2009}).
By repeatedly decomposing a problem into independently solvable
subproblems, these algorithms can often solve in polynomial time problems
that would otherwise take exponential time. The main difficulty in
nonconvex optimization is the combinatorial structure of the modes, which
convex optimization and randomization are ill-equipped to deal with, but
problem decomposition techniques are well suited to. We thus propose a
novel nonconvex optimization algorithm, which uses recursive decomposition
to handle the hard combinatorial core of the problem, leaving a set of
simpler subproblems that can be solved using standard continuous
optimizers.

The main challenges in applying problem decomposition to continuous
problems are extending them to handle continuous values and defining an
appropriate notion of local structure. We do the former by embedding
continuous optimizers within the problem decomposition search, in a manner
reminiscent of satisfiability modulo theory solvers~\cite{DeMoura2011}, but
for continuous optimization, not decision problems.
%but with soft constraints instead of hard ones. 
We do the latter by
observing that many continuous objective functions are approximately locally
decomposable, in the sense that setting a subset of the variables causes
the rest to break up into subsets that can be optimized nearly independently.
This is particularly true when the objective function is a sum of terms over
subsets of the variables, as is typically the case. A number of continuous
optimization techniques employ a static, global decomposition (e.g., block
coordinate descent~\cite{nocedal06} and partially separable
methods~\cite{griewank1981partiallysep}), but many problems only decompose
locally and dynamically, which our algorithm accomplishes.
%and to our knowledge our algorithm is the first to
%exploit this. 
%Branch and bound methods also use a form of recursive decomposition,
%which can be local 
%Some other global solvers use a form of recursive decomposition
%%Some global solvers use a form of recursive decomposition
%%(e.g., methods based on branch and bound)
%%only others to employ local decomp (in a diff form) are BnB methods; but
%they don't scale (cite) and they use a lot of hard-constraint techniques 
%which are not applicable on these problems
%
%branch and bound solvers use a form of recursive decomposition,
%but global methods 
%Some constrained optimization solvers use a form of
%recursive decomposition 
%(e.g., most MINLP methods~\cite{bussieck2014minlp}),
%but they assume convex objective functions and are thus not applicable
%to nonconvex problems.

For example, consider protein folding~\cite{anfinsen73,baker00}, the
process by which a protein, consisting of a chain of amino acids, assumes
its functional shape. The computational problem is to predict this final
conformation by minimizing a highly nonconvex energy function consisting
mainly of a sum of pairwise distance-based terms representing chemical
bonds, electrostatic forces, etc. Physically, in any conformation, an atom
can only be near a small number of other atoms and must be far from the
rest; thus, many terms are negligible in any specific conformation, but
each term is non-negligible in some conformation. This suggests that sections
of the protein could be optimized independently if the terms connecting
them were negligible but that, at a global level, this is never
true. However, if the positions of a few key atoms are set appropriately
then certain amino acids will never interact, making it possible to
decompose the problem into multiple independent subproblems and solve each
separately. A local recursive decomposition algorithm for continuous
problems can do exactly this.

We first define local structure and then present our algorithm, RDIS,
which (asymptotically) finds the global optimum of a nonconvex function
by (R)ecursively (D)ecomposing the function into locally (I)ndependent
(S)ubspaces. In our analysis, we show that RDIS achieves an exponential
speedup versus traditional techniques for nonconvex optimization such as
gradient descent with restarts and grid search (although the complexity
remains exponential, in general). This result is supported empirically, as
RDIS significantly outperforms standard nonconvex optimization algorithms
on three challenging domains: structure from motion, highly multimodal
test functions, and protein folding. \looseness=-1

\section{Recursive Decomposition for Continuous Optimization}

This section presents our nonconvex optimization algorithm, RDIS. We first present our notation
and then define local structure and a method for realizing it. We then describe RDIS and provide
pseudocode. 
%We conclude the section by summarizing the benefits of RDIS.

In unconstrained optimization, the goal is to minimize an objective function $f(\x)$ over
the variables $\x \in \mathbb{R}^n$. We focus on functions 
${f : \mathbb{R}^n\rightarrow\mathbb{R}}$ that are continuously differentiable and have a 
nonempty optimal set $\x^*$ with optimal value $f^* = f(\x^*) > -\infty$. Let $\idx = \{ 1, \dots, n \}$ 
be the indices of $\x$, let $C \subseteq \idx$, let $\x_C \in \mathbb{R}^{|C|}$ be the 
restriction of $\x$ to the indices in $C$, and let $\rho_C \in \textup{domain}(\x_C)$ be
a partial assignment where only the variables corresponding to the
indices in $C$ are assigned values. We define $\x|_{\rho_C} \in \mathbb{R}^{n - |C|}$ 
to be the subspace where those variables with indices in $C$ are set to the values in $\rho_C$
(i.e., for some $\rho_C$ and for all $i \in C$ we have $\x|_{\rho_C,i} = \rho_{C,i}$).
Given $U = \idx \backslash C$ and partial
assignment $\rho_C$, then, with a slight abuse of notation, we define the restriction of the function 
%corresponding 
to the domain $\x|_{\rho_C}$ as $f|_{\rho_C}(\x_U)$. In the following, we directly
partition 
%$\{ \x_C, \x_U \}$ 
$\x$ instead of discussing the partition 
%$\{ C, U \}$ 
of $\idx$ that induces it.
%$f|_{\rho_C} = \{ (\x, f(\x)) : i \in C, x_j \in \x, \rho_{C,i} = x_j, \forall i = j \}$

\subsection{Local structure}

%example should be: $f(x,y,z) = g(x,y)+h(y,z)$? can be optimized $\max_y( \max_x g(x,y) + \max_z h(y, z) )$.
%number of modes is  
%
%if example is $f(x,y)$, then say that number modes $M$ can be large. but if we know that 
%$f(x,y) = g(x) + h(y)$, we can optimize each independently, giving $M_g + M_h$ modes. optimizing it without
%splitting results in $M = M_g * M_h$ modes, which is exponentially more.

A function is fully decomposable (separable) if it can be expressed as ${f(\x) = \sum_{i=1}^n{ g_i(x_i) } }$.
Such functions are easy to optimize, since they decompose with respect to minimization; i.e., 
${\min_{\x}{f(\x)} = \sum_{i=1}^n \min_{x_i}g_i(x_i) }$. Conversely, decomposable nonconvex functions that 
are optimized without first decomposing them require exponentially more exploration to find the
global optimum than the decomposed version. 
For example, let $M_f$ be the set of modes of $f$ and let $M_{i}$ be the modes of each $g_i$.
Knowing that $f$ is decomposable allows us to optimize each $g_i$ independently, giving
$|M_f| = \sum_{i=1}^n |M_{i}|$ modes to explore. However, if we instead
optimized $f$ directly, we would have to explore $\prod_{i=1}^n |M_{i}|$ modes,
which is exponential in $n$.
%
%For example, consider the function
%${f(\x)}$, and let $M_f$ be the set of modes of $f$.
%If we know that $f(x,y) = \sum_{i=1}^n g_i(x_i)$, we can optimize
%each independently, giving $|M_f| = |M_g|+|M_h|$ modes to explore. However, if we instead 
%optimized $f$ directly, we would have to explore $|M_g|\cdot|M_h|$ modes, 
%which is exponential in the number of terms in $f$.
%
%and let $M_1$ be the set of modes of term $f_1$ and similarly
%for $M_2$ and $f_2$. Then the modes of $f$ are exactly $M = M_1 \times M_2$, where $\times$ is the
%cartesian product, and the number of modes is $|M| = |M_1| \cdot |M_2|$, which is
%exponential in the number of terms in $f$. However, because $f$ decomposes, $f_1$ and $f_2$ can
%be optimized independently and only $|M_1| + |M_2|$ modes must be explored.
%%If $f(\x)$ is a nonconvex function then the complexity of globally optimizing it
%%is (in general) exponential in the number of dimensions due to the combinatorial number of modes. However,
%%decomposition reduces the complexity exponentially, since there is no longer a combinatorial explosion
%%of modes.
Unfortunately, fully decomposable functions like $f$ are rare,
%in real problems 
as variables generally appear in multiple terms with many different variables
%with different variables 
and thus the minimization 
does not trivially distribute. However, decomposition can still be achieved if the function
exhibits global or local structure, which we define here.
%variables tend to interact 
%nontrivially. However, we define here more general notions of structure, which enable decomposition for
%a much broader class of functions.
%
\begin{define}
~\\
\indent \emph{\textbf{(a)}} $f(\x)$ is \textbf{globally decomposable} if there exists a partition 
$\{\x_C, \x_{U_1}, \x_{U_2} \}$ of $\x$ such that,
for every partial assignment ${\rho_C}$, 
${f|_{\rho_C}(\x_{U_1}, \x_{U_2}) = f_1|_{\rho_C}(\x_{U_1}) + f_2|_{\rho_C}(\x_{U_2})}$.

\vspace{2.5pt}

\emph{\textbf{(b)}} $f(\x)$ is \textbf{locally decomposable} in the subspace  
$\x|_{\rho_C}$
if there exists a partition $\{\x_C, \x_{U_1}, \x_{U_2} \}$ of $\x$ 
and a partial assignment $\rho_C$ such that
${f|_{\rho_C}(\x_{U_1}, \x_{U_2}) = f_1|_{\rho_C}(\x_{U_1}) + f_2|_{\rho_C}(\x_{U_2})}$.

\vspace{2.5pt}

\emph{\textbf{(c)}} $f(\x)$ is \textbf{approximately locally decomposable} in a neighbourhood of
the subspace $\x|_{\rho_C}$
if there exists a partition $\{\x_C, \x_{U_1}, \x_{U_2} \}$ of $\x$, 
partial assignments $\rho_C, \sigma_C$,
and $\delta, \epsilon \geq 0$ such that 
if ${|| \sigma_C - \rho_C || \leq \delta}$ then
%$\left| f(\mb{b},\x_2,\x_3) - [f_1(\mb{b},\x_2) + f_2(\mb{b},\x_3) ]\right| \leq \epsilon$.
$\left| f|_{\sigma_C}(\x_{U_1},\x_{U_2}) - [f_1|_{\sigma_C}(\x_{U_1}) + f_2|_{\sigma_C}(\x_{U_2}) ]\right| \leq \epsilon$.
%${f|_{\sigma_C}(\x_{U_1}, \x_{U_2}) = f_1|_{\sigma_C}(\x_{U_1}) + f_2|_{\sigma_C}(\x_{U_2})}$.
\label{def:structure}
\end{define}

Global structure (Definition~\ref{def:structure}a), while the easiest to exploit, is also the 
least prevalent. Local structure,
which may initially appear limited, subsumes global structure while also allowing
different decompositions throughout the space, making it strictly more general.
%(i.e., global structure is a special case of local structure). 
Similarly, approximate local structure
subsumes local structure.
In protein folding, for example, two amino acids may be pushed either close together or far 
apart for different configurations of other amino acids.
In the latter case, they can be optimized independently
%allowing them to be optimized independently in the latter case 
because the terms connecting them are negligible. 
Thus, for different partial configurations of
the protein, different approximate decompositions are possible. 
The independent subspaces that result from local decomposition can themselves
exhibit local structure, allowing them to be decomposed in turn. If an algorithm exploits
local structure effectively, it never has to perform the full combinatorial optimization. 
%Note that 
Local structure does not need to exist everywhere in the space, just in the
regions being explored. For convenience, we only refer to local structure below,
unless the distinction between global or (approximate) local decomposition is relevant.
%since local decomposition subsumes global
%decomposition, we only refer to local decomposition below, unless the distinction is relevant.
%Similarly, for convenience, we do not refer explicitly to approximate local decomposition.

One method for achieving local decomposition 
%(which we employ in this work) 
is via (local) simplification.
We say that $f_i(\x_C, \x_U)$ is \emph{(approximately locally) simplifiable} in the subspace $\x|_{\rho_C}$
defined by partial assignment $\rho_C$ if, for a given $\epsilon \geq 0$, 
${ \UB{f_i}|_{\rho_C}(\x_U) - \LB{f_i}|_{\rho_C}(\x_U)  \leq 2\epsilon }$,
where $\UB{h}(\x)$ and $\LB{h}(\x)$ refer to the upper and lower bounds of $h(\x)$, respectively.
Similarly,
$f(\x)$ is \emph{(approximately locally) simplified} in the subspace $\x|_{\rho_C}$ 
defined by partial assignment $\rho_C$ 
if, for a given $\epsilon \geq 0$, all simplifiable terms $f_i|_{\rho_C}(\x_U)$ are replaced by the
constant ${k_i = \frac{1}{2}[\UB{f_i}|_{\rho_C}(\x_U) + \LB{f_i}|_{\rho_C}(\x_U)] }$.
%
%\begin{define}
%~\\ \indent \emph{\textbf{(a)}} 
%$f_i(\x_C, \x_U)$ is \textbf{(locally) simplifiable} in the subspace $\x|_{\rho_C}$
%defined by partial assignment $\rho_C$ if, for a given $\epsilon \geq 0$, 
%${ \UB{f_i}|_{\rho_C}(\x_U) - \LB{f_i}|_{\rho_C}(\x_U)  \leq 2\epsilon }$,
%where $\UB{h}(\x)$ and $\LB{h}(\x)$ refer to the upper and lower bounds of $h(\x)$, respectively.
%
%\vspace{2.5pt}
%
%\emph{\textbf{(b)}} 
%$f(\x)$ is \textbf{(locally) simplified} in the subspace $\x|_{\rho_C}$ defined by partial assignment $\rho_C$ 
%if, for a given $\epsilon \geq 0$, all simplifiable terms $f_i|_{\rho_C}(\x_U)$ are replaced by the
%constant ${k_i = \frac{1}{2}[\UB{f_i}|_{\rho_C}(\x_U) + \LB{f_i}|_{\rho_C}(\x_U)] }$.
%\end{define}
%
For a function that is a sum of terms, local decomposition occurs when some of these terms simplify in
such a way that the minimization can distribute over independent groups of terms and variables (like 
component decomposition in Relsat~\cite{bayardo00} or in the protein folding example above). Given that
there are $m$ terms in the function, the maximum possible error in the simplified function versus the true
function is $m\cdot\epsilon$. However, this would require all terms to be simplified and their true values to be at
one of their bounds, which is extremely unlikely; rather, errors in different terms often cancel, and the 
simplified function tends to remain accurate. Note that $\epsilon$ induces a tradeoff between
acceptable error in the function evaluation and the computational cost of optimization, since a simplified function
has fewer terms and thus evaluating it and computing its gradient are both cheaper.
%it is easier to compute its gradient and to evaluate. 
While the above definition is for sums of terms, 
the same mechanism applies to functions that are products
of (non-negative) factors, although error grows multiplicatively here.
% -- although the resulting error bounds are naturally worse due to multiplicative compounding.

%Then, for some subspace, can simplify terms or factors within the function, enabling decomposition.
%The above definition is given for terms, but this works for $f(x) = \prod f_i(x)$, too -- though the resulting error 
%bounds naturally worse due to producting the epsilons from multiple simplifications (compounding).

\subsection{The RDIS Algorithm}

RDIS is an optimization method that explicitly finds and exploits 
local decomposition. Pseudocode is shown in Algorithm~\ref{alg:rdis}, with subroutines
explained in the text.
At each level of recursion, RDIS chooses a subset of the variables $\x_C \subseteq \x$
(inducing a partition $\{ \x_C, \x_U \}$ of $\x$)
and assigns them values $\rho_C$ such that
the simplified objective function $f|_{\rho_C}(\x_U)$ decomposes into multiple (approximately)
independent sub-functions 
$f_i|_{\rho_C}(\x_{U_i})$, where $\{ \x_{U_1}, \dots, \x_{U_k} \}$ is a partition of $\x_U$ and
$1 \leq k \leq n$. 
%$\x_U = \left\{\x_{u_1},\dots,\x_{u_k}\right\}$, $\x_u = \x \backslash \x_c$, and $1 \leq k \leq n$.
RDIS then recurses on each sub-function, globally optimizing it conditioned on
the assignment $\x_C = \rho_C$. When the recursion completes, 
RDIS uses the returned optimal values (conditioned on $\rho_C$) of $\x_U$ to choose new values for 
$\x_C$ and then simplifies, decomposes, and optimizes the function again. This repeats until 
%either the global minimum is found, or 
a heuristic stopping criterion is satisfied.

\newcommand*\Let[2]{{#1} $\gets$ {#2}}
\algrenewcommand\algorithmicrequire{\textbf{Input:}}
\algrenewcommand\algorithmicensure{\textbf{Output:}}
\renewcommand{\algorithmiccomment}[1]{\bgroup\hfill\footnotesize //~\emph{#1}\egroup}

\algnewcommand{\IIf}[1]{\State\algorithmicif\ #1\ \algorithmicthen}
\algnewcommand{\EndIIf}{\unskip\ }

\begin{algorithm}[h]
   \caption{Recursive Decomposition into locally Independent Subspaces (RDIS). 
%   An algorithm for approximate global minimization of an objective function by dynamic and recursive decomposition into locally independent subspaces, using a specified subspace optimizer (SS).
   }
   \label{alg:rdis}
\begin{algorithmic}[1]
 \Require{Function $f$, variables $\x$, initial state $\x^0$,
        subspace optimizer $S$, and approximation error $\epsilon$.}
 \Ensure (Approximate) global minimum $f^*$ at state $\x^*.$
%   \vspace{2.0pt}
 \Function{RDIS}{$f, \x, \x^0, S, \epsilon$}
%   \IIf{$f(\Xb)$ is in cache}  \Return{cached $\tuple{f^*, \x^*}$} \EndIIf 
%   \State \emph{put in BnB line??}
   \State \Let{$\x_C$}{\Call{chooseVars}{$\x$}} \label{line:choosevars} \Comment{variable selection} 
   \State $\x_U \gets \x \backslash \x_C, ~~f^* \gets \infty, ~~\x^* \gets \x^0$
%   \While{$\x^*$ is not the global optimum of $f(\x)$} \label{line:while}
   \Repeat \label{line:loop}
     \State{partition $\x^*$ into $\{ \sigma^*_C, \sigma^*_U \}$}
     \State{$\rho_C \gets {S}(~f|_{\sigma^*_U}(\x_C ), \sigma^*_C~)$}  \label{line:ssopt}  \Comment{value selection}
%     \If{$\rho_C \approx \sigma^*_C$} \Comment{if no progress, then restart} \label{line:no progress} 
%       \State $\sigma^*_C \gets$ randomly choose state from $\x_C$
%       \State{$\rho_C \gets {S}(~f|_{\sigma^*_U}(\x_C ), \sigma^*_C~)$} 
%     \EndIf
     \State $\hat{f}|_{\rho_C}(\x_U) \gets$ \Call{simplify}{$f|_{\rho_C}(\x_U), \epsilon$} \label{line:simplify} 
     \State $\{\hat{f}_{i}(\x_{U_i})\} \gets$ \Call{decompose}{$\hat{f}|_{\rho_C}(\x_U)$} \label{line:decompose} 
     \For{i = 1, \dots, k} \Comment{recurse on the components}
        \State $\tuplesm{f_i^*,\rho_{U_i}} \gets$ \Call{RDIS}{$\hat{f}_{i},\x_{U_i}, \sigma_{U_i}^*, S, \epsilon$} 
            \label{line:recurse} 
     \EndFor
     \State $f^*_\rho \gets \sum_{i=1}^k f^*_i, ~~\rho_U \gets \cup_{i=1}^k{ \rho_{U_i} }$
     \vspace{1.2pt}
     \If{$f^*_\rho < f^*$} \Comment{record new minimum}
%      \vspace{1.1pt}
        \State $f^* \gets f^*_\rho, ~~\x^* \gets \rho_C \cup \rho_U$
     \EndIf
%   \EndWhile
   \Until{ stopping criterion is satisfied }
   \State \Return $\tuple{f^*, \x^*}$
 \EndFunction
\end{algorithmic}
\end{algorithm}

RDIS selects variables (line~\ref{line:choosevars}) heuristically, with the 
goal of choosing a set of variables that 
enables the largest amount of decomposition, as this provides the largest computational gains.
Specifically, RDIS uses a hypergraph partitioning algorithm
to determine a small cutset that will decompose the graph; 
this cutset becomes the selected variables, $\x_C$.
%A small set of variables is desirable because 
%%
Values for $\x_C$ are determined (line~\ref{line:ssopt}) by calling a nonconvex subspace 
optimizer with the remaining variables ($\x_U$) fixed to their current values.
The subspace optimizer $S$ is specified by the user and is customizable to the problem being solved.
In our experiments we used multi-start versions of conjugate gradient descent and 
Levenberg-Marquardt~\cite{nocedal06}. 
%, but alternative methods
%%s such as grid search, expectation-maximization, and quasi-Newton 
%are equally viable. 
Restarts occur within line~\ref{line:ssopt}: 
if $S$ converges without making progress then 
it restarts to a new point in $\x_C$ and runs until it reaches a local minimum.
%If $S$ makes no progress and runs $S$ again,
%since these values have already been explored.
%Note that restarts are the mechanism that RDIS uses to 
%explore the modes of the objective function. 

\begin{figure}[t]
%\vskip -0.2in
%\begin{center}
\centering
\centerline{\includegraphics[width=0.8\columnwidth]{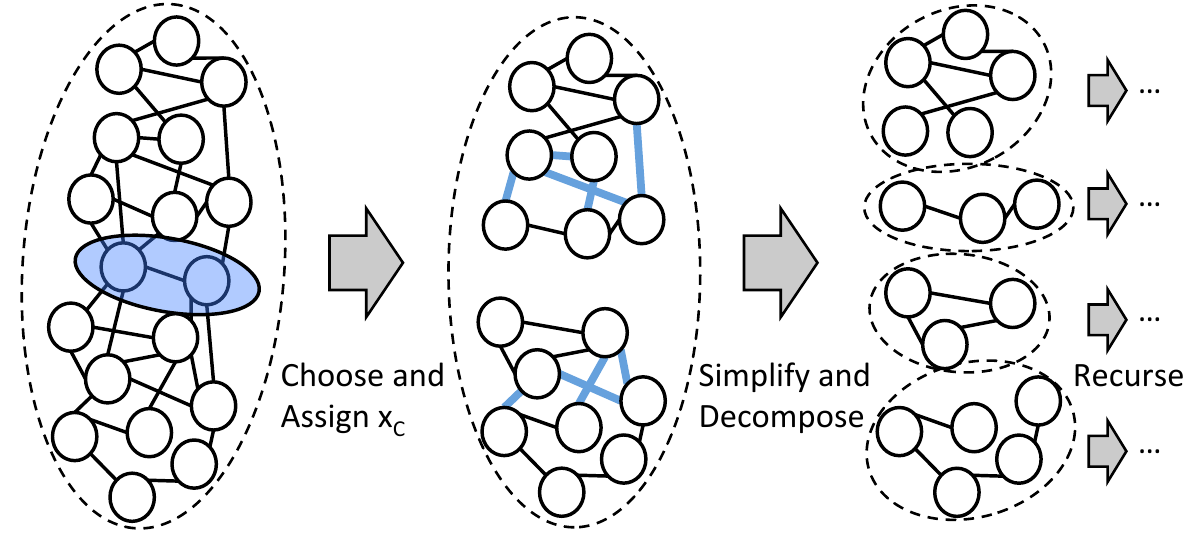}}
\caption{\small{Visualization of RDIS decomposing the objective function. 
Vertices (circles) represent variables and edges connect each pair of variables in a 
term.
%(RDIS currently considers terms non-decomposable). 
Left: RDIS selects $\x_C$ (shaded oval). 
Middle: The function during simplification. 
Thick edges indicate simplifiable terms. 
Assigned variables are constant and have been removed. 
Right: The function after 
%simplification and
decomposition. 
}}
\label{fig:decomp}
%\end{center}
%\vskip -0.2in
\end{figure} 

%Simplification of the objective function (line~\ref{line:simplify}) 
%replaces each simplifiable term with the appropriate constant. 
To simplify the objective function (line~\ref{line:simplify}), RDIS determines
which terms are simplifiable (i.e., have sufficiently small bounds) and then simplifies (approximates)
these by replacing them with a constant. 
%Terms that become constant 
These terms
are not passed to the recursive calls.
%Any other terms 
After variables have been assigned and the function simplified, RDIS
locally decomposes (line~\ref{line:decompose}) the simplified function 
into independent sub-functions (components)
that have no overlapping terms or variables and thus can be optimized independently, 
which is done by recursively calling RDIS on each. 
See Figure~\ref{fig:decomp} for a visualization of this process. 
The recursion halts when 
\Call{ChooseVars}{} selects all of $\x$ (i.e., $\x_C = \x$ and $\x_U = \varnothing$), 
which occurs when
$\x$ is small enough that the subspace optimizer can optimize $f(\x)$ directly.
%the number of variables passed to RDIS is sufficiently small. 
At this point, 
%RDIS selects all variables (i.e.,  
%$\x_C = \x$ and $\x_U = \varnothing$), 
RDIS repeatedly calls the subspace optimizer until the
stopping criterion is met,
which (ideally) finds the global optimum of $f|_{\sigma^*_U}(\x_C) = f(\x)$
since $S$ is a nonconvex optimizer.
%to find the global optimum of 
%$f|_{\sigma^*_U}(\x_C) = f(\x)$, and returns.
%finds the global optimum of 
%$f|_{\sigma^*_U}(\x_C) = f(\x)$ using the subspace 
%optimizer, and returns.
%Values are repeatedly assigned until a user-specified stopping criterion is satisfied.
The stopping criterion is user-specified, and depends on the subspace optimizer.
%determines that it has found the global optimum (line~\ref{line:loop}), which depends 
%on the subspace optimizer, or a user-specified heuristic criterion (e.g., enough iterations have occurred). 
If a multi-start descent method is used, termination
occurs after a specified number of restarts, corresponding to a certain probability that the
global optimum has been found. If the subspace optimizer is grid search, then the loop 
terminates after all values of $\x_C$ have been assigned. 
More subroutine details 
%variable selection, value selection, and simplification 
%the subroutines
are provided in Section~\ref{sec:details}. 
%\vspace{-0.1em}

\section{Analysis}
\label{sec:analysis}

We now present analytical results demonstrating the benefits of RDIS versus standard
algorithms for nonconvex optimization. Formally, we show that RDIS 
%converges to the global optimum in
explores the state space
in exponentially less time than the same subspace optimizer for a class of functions that 
are locally decomposable, and that it will (asymptotically) converge to the global optimum.
Due to space limitations, proofs are 
%not included.
presented in Appendix~\ref{app:analysis}. %the supplementary material.\footnote{%Available at 
%\scriptsize 
%\url{http://cs.uw.edu/homes/pedrod/papers/ijcai15sp.pdf}
%\url{http://homes.cs.washington.edu/~pedrod/papers/ijcai15sp.pdf}.
%}
Let the number of variables be $n$ and the number of values 
assigned by RDIS to $\x_C$ be $\xi(d)$, where $d$ is the size of $\x_C$. The form of
$\xi(d)$ depends on the subspace optimizer, but can be roughly interpreted as 
the number of modes of the sub-function $f|_{\sigma^*_U}(\x_C)$ to explore
times a constant factor.

\begin{prop}
If, at each level, RDIS chooses $\x_C \subseteq \x$ of size 
$| \x_C | = d$ such that, for each 
selected value $\rho_C$, the simplified function $\hat{f}|_{\rho_C}(\x_U)$
%remaining variables $\x_U = \x \backslash \x_C$ 
locally decomposes into $k > 1$ independent sub-functions $\{ \hat{f}_i(\x_{U_i}) \}$
%components of
with equal-sized domains 
%$\x_{U_1}, \dots, \x_{U_k}$, 
$\x_{U_i}$,
then 
the time complexity of RDIS is $O(\frac{n}{d} \xi(d)^{\log_k{(n/d)}})$.
%If, at each level of recursion $r$, RDIS chooses $\x_c^r \subseteq \x^r$ of size 
%$|\x_c^r| = d$ such that for each 
%chosen value $\vb_c^r$ of $\x_c^r$, the remaining variables $\x_u^r = \x^r 
%\backslash \x_c^r$ decompose into $k > 1$ independent components, $\x_{u_1}^r, 
%\dots, \x_{u_k}^r$ and the number of values assigned to $\x_c$ is $\xi(di)$, then 
%the time complexity of RDIS is $O(\frac{n}{d} \xi(d)^{\log_k{(n/d)}})$.
%%where $\xi(d)$ is the number of values assigned to $\x_c$ (assuming this number is 
%%consistent across different levels of recursion $\rightarrow$ which it should be because they're
%%all of size $d$).
\label{thm:complexity}
\end{prop}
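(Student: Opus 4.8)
The plan is to set up and solve a divide-and-conquer recurrence for the running time $T(n)$ of RDIS on a problem of $n$ variables. First I would unwind a single level of recursion. At a given level, \textsc{chooseVars} selects $\x_C$ of size $d$, and the \textbf{Repeat} loop assigns $\xi(d)$ values $\rho_C$ to $\x_C$ (this is precisely what $\xi(d)$ counts). For each such assignment the simplified function decomposes, by hypothesis, into $k$ independent sub-functions over equal-sized domains of size $(n-d)/k$, and RDIS recurses once on each. Crucially, because the sub-functions are independent, their costs \emph{add} rather than \emph{multiply}: the $k$ recursive calls contribute $k\,T((n-d)/k)$, not $T((n-d)/k)^{k}$. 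Combining, I obtain the recurrence
\begin{equation}
T(n) = \xi(d)\, k\, T\!\left(\frac{n-d}{k}\right) + g(n),
\end{equation}
where $g(n)$ collects the per-level overhead of \textsc{chooseVars}, \textsc{simplify}, \textsc{decompose}, and the subspace-optimizer calls, together with the base case $T(d) = O(\xi(d))$ that applies when \textsc{chooseVars} returns all of $\x$ and the optimizer is run directly.

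Next I would solve the recurrence by a recursion-tree argument. Treating $d$ as fixed and absorbing the additive $-d$ (which only affects lower-order terms), the subproblem size shrinks by a factor $k$ per level, so the tree has depth $L = \log_k(n/d)$, at which point each leaf is a base case of size $d$. Each internal node spawns $\xi(d)\,k$ children, so level $j$ contains $(\xi(d)\,k)^{j}$ nodes and the leaf level contains $(\xi(d)\,k)^{L}$ of them. Summing the work over the leaves gives
\begin{equation}
(\xi(d)\,k)^{L}\cdot O(\xi(d)) = O\!\left(\xi(d)^{\,L+1}\,k^{L}\right) = O\!\left(\frac{n}{d}\,\xi(d)^{\log_k(n/d)+1}\right),
\end{equation}
using $k^{L} = n/d$. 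Since $d$ is fixed, the trailing factor $\xi(d)$ is a constant and is absorbed into the $O(\cdot)$, yielding the claimed $O\!\left(\tfrac{n}{d}\,\xi(d)^{\log_k(n/d)}\right)$.

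The remaining step is to confirm that the leaf work dominates, so that the whole sum---not just the last level---is captured by the stated bound. Because the branching factor $\xi(d)\,k$ exceeds $1$ (as $k>1$), the per-level work grows geometrically toward the leaves, placing the recurrence in the ``leaf-heavy'' regime; hence the total is within a constant factor of the leaf work, provided $g(n)$ is only polynomially large, which holds for graph partitioning, simplification, and a fixed number of subspace-optimizer iterations.

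The main obstacle I anticipate is not the algebra but the modelling assumptions that make the clean recurrence valid: the hypothesis that every selected $\rho_C$ induces a decomposition into exactly $k$ independent parts of equal size, so the tree stays balanced, and the interpretation of $\xi(d)$ as a per-level multiplier independent of the recursion depth. I would state these explicitly and note that relaxing the equal-size or constant-$k$ assumptions would replace the exact $\log_k(n/d)$ exponent with a bound governed by the worst-case split, and that the additive overhead $g(n)$ must be verified to be subdominant for the specific subroutines used.
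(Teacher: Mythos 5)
Your proposal is correct and follows essentially the same route as the paper's proof: both set up the divide-and-conquer recurrence $T(n) \approx \xi(d)\,k\,T\!\left(\frac{n-d}{k}\right) + (\text{per-level overhead})$, exploit the additivity of the $k$ independent components, and solve to obtain $O\!\left((k\,\xi(d))^{\log_k(n/d)}\right) = O\!\left(\frac{n}{d}\,\xi(d)^{\log_k(n/d)}\right)$. Your recursion-tree presentation and explicit absorption of the trailing $\xi(d)$ factor (treating $d$ as fixed) are just a more detailed rendering of the paper's closed-form solution.
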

\noindent Note that since RDIS uses hypergraph partitioning to choose variables, it will
always decompose the remaining variables $\x_U$. This is also supported by our experimental results; if
there were no decomposition, RDIS would not perform any better than the baselines. 
%The main assumption here is on the number
%of selected variables necessary to achieve this decomposition.

%- what we've achieved here by using a subspace opt is that we've reduced the complexity to
%be based on the number of modes. the decomposition is valid because we use hypergraph
%partitioning, which will always partition the graph (unless fully connected). the only assumption
%is on the size of the cutset needed.
%- experimentally, we show that this occurs, too...

%\noindent
%\emph{We prove Theorem~\ref{thm:complexity} by solving the recurrence relation
%defined by RDIS.} 
From Proposition~\ref{thm:complexity}, we can compute the time complexity of RDIS for
different subspace optimizers. Let the subspace optimizer be grid search (GS) over a bounded
domain of width $w$ with spacing $\delta$ in each dimension. Then the complexity of
grid search is simply $O((w/\delta)^n) = O(s^n) $.
\begin{prop}
If the subspace optimizer is grid search, then $\xi(d) = (w/\delta)^d=s^d$, 
and the complexity of \RDIS{GS} is $O(\frac{n}{d} s^{d\log_k{(n/d)}})$.
\end{prop}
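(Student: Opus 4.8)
The plan is to obtain this result as a direct corollary of Proposition~\ref{thm:complexity}, since that proposition already expresses the time complexity of RDIS as $O(\tfrac{n}{d}\,\xi(d)^{\log_k(n/d)})$ in terms of the subspace-optimizer cost $\xi(d)$. The only genuinely new content here is to compute $\xi(d)$ for the specific case of grid search; the complexity bound then follows by substitution and a single application of the power law for exponents.

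First I would pin down $\xi(d)$. Recall that $\xi(d)$ is defined as the number of values that RDIS assigns to $\x_C$ when $|\x_C| = d$. When the subspace optimizer $S$ is grid search over a bounded domain of width $w$ with spacing $\delta$ in each dimension, the stopping criterion (as stated in the text) is that the loop terminates only after \emph{all} values of $\x_C$ have been assigned, so no early termination occurs. Each of the $d$ coordinates of $\x_C$ ranges over $w/\delta = s$ grid points, and grid search enumerates the full Cartesian product of these one-dimensional grids. Hence the number of assigned values is $\xi(d) = (w/\delta)^d = s^d$, exactly as claimed. As a consistency check, setting $d = n$ recovers the stated cost $O(s^n)$ of plain grid search over all $n$ variables, confirming the formula for $\xi$.

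Second, I would substitute $\xi(d) = s^d$ into the bound of Proposition~\ref{thm:complexity} and simplify using $(a^b)^c = a^{bc}$:
\begin{equation}
O\!\left(\tfrac{n}{d}\,\xi(d)^{\log_k(n/d)}\right)
= O\!\left(\tfrac{n}{d}\,(s^d)^{\log_k(n/d)}\right)
= O\!\left(\tfrac{n}{d}\,s^{\,d\log_k(n/d)}\right),
\end{equation}
which is precisely the stated complexity of \RDIS{GS}.

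I do not anticipate a substantive obstacle, as the argument is essentially a specialization of an already-established bound. The one point requiring care is the justification that $\xi(d)$ equals the \emph{complete} enumeration count $s^d$ rather than a smaller quantity: this relies on the fact that grid search, unlike a multi-start descent method with a restart budget, does not halt until the entire $d$-dimensional grid has been traversed. Making that dependence on the stopping criterion explicit is the only place where one must be precise, and it is handled fully by the definition of the grid-search stopping rule given earlier in the text.
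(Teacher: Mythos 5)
Your proposal is correct and matches the paper's treatment: the paper presents this proposition as an immediate corollary of the general complexity bound, obtained by observing that grid search over a bounded domain of width $w$ with spacing $\delta$ enumerates all $(w/\delta)^d = s^d$ grid points of $\x_C$ (so $\xi(d) = s^d$) and then substituting into $O(\frac{n}{d}\,\xi(d)^{\log_k(n/d)})$. Your added care about the stopping criterion guaranteeing full enumeration is a sound and slightly more explicit justification of the same step.
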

\noindent 
Rewriting the complexity of grid search as $O(s^n) = O(s^{d(n/d)})$, we see that it is exponentially
worse than the complexity of \RDIS{GS} when decomposition occurs.

Now consider a descent method with random restarts (DR) as the subspace optimizer.
%Let the basin of attraction of the global minimum (the global basin) be a hypercube
%with sides of length $l$, and let the state space (of dimension $n$) be a hypercube with sides of length $L$.
%The volume of the global basin is $l^n$ and the volume of the state space is $L^n$.
Let the volume of the basin of attraction of the global minimum (the global basin) be $l^n$ and
the volume of the space be $L^n$. 
Then the probability of randomly restarting in the global basin
is $(l/L)^n = p^n$. Since the restart behavior of DR is a Bernoulli process, the expected
number of restarts to reach the global basin is $r = p^{-n}$, from the shifted geometric distribution.
%
%Let $p$ be the probability of uniformly randomly sampling the basin of attraction of the global
%minimum (the global basin) in one dimension. 
%%Then the probability of DR randomly restarting in 
%%the global basin is $p^n$ and the restart behavior is a 
%%Bernoulli process, where the number of restarts to get to the global basin, $r$, is given by
%%the shifted geometric distribution $P(G=r) = {(1-p^n)^{r-1}p^n}$. Thus, 
%Then
%the expected number of restarts for DR to 
%find the global optimum is $p^{-n}$. 
If the number of iterations needed to reach the stationary point of 
the current basin is $\tau$ 
%and each step is relatively simple, 
then the expected complexity of DR is 
$O(\tau p^{-n})$. 
%Similarly, for a space of dimension $d$, the expected number of 
%necessary restarts is $p^{-d}$. 
%If RDIS uses DR as its subspace optimizer, then
If DR is used within RDIS, then
we obtain the following result.
\begin{prop}
If the subspace optimizer is DR, then the expected value of $\xi(d)$ is 
$\tau p^{-d}$, and the expected complexity of \RDIS{DR} is 
$O(\frac{n}{d} (\tau p^{-d})^{\log_k{(n/d)}})$.
\end{prop}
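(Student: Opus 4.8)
The plan is to prove the two claims in sequence: first the expected number of values $E[\xi(d)]$, then the expected complexity, where the only real subtlety is pushing the expectation through the nonlinear bound of Proposition~\ref{thm:complexity}.

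First I would compute $E[\xi(d)]$. Within RDIS the DR subroutine is applied to the cutset $\x_C$ of size $d$, not to all $n$ variables. Assuming, consistently with the setup above, that the global basin of the subproblem $f|_{\sigma_U^*}(\x_C)$ keeps the same fractional width $p = l/L$ in each of its $d$ coordinates, a single random restart lands in that basin with probability $p^d$. The restarts form a Bernoulli process, so the number $R$ of restarts needed to first reach the global basin is (shifted) geometric with success probability $p^d$, whence $E[R] = p^{-d}$. Each restart runs $\tau$ iterations to its stationary point, so $\xi(d) = \tau R$ and $E[\xi(d)] = \tau\,E[R] = \tau p^{-d}$, which is the first claim.

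For the complexity I would not substitute $\xi(d) = \tau p^{-d}$ directly into $O(\frac{n}{d}\xi(d)^{\log_k(n/d)})$, since $\xi(d)$ is random and enters nonlinearly through the exponent (by Jensen, replacing a single shared random variable by its mean inside a power is not valid). Instead I would return to the recurrence underlying Proposition~\ref{thm:complexity}. Writing $T(n)$ for the work on a size-$n$ problem, a node performs $\xi(d)$ value assignments, recursing on $k$ independent components of size $n/k$ for each, giving $T(n) = \xi(d)\cdot k\cdot T(n/k)$ with $T(d) = O(1)$. The restart randomness at a node is independent of the randomness used in its recursive subtrees, so taking expectations factorizes: $E[T(n)] = E[\xi(d)]\cdot k\cdot E[T(n/k)]$. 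Unrolling this over the $\log_k(n/d)$ recursion levels yields $E[T(n)] = (\tau p^{-d}\,k)^{\log_k(n/d)} = \frac{n}{d}(\tau p^{-d})^{\log_k(n/d)}$, using $k^{\log_k(n/d)} = n/d$, which is the second claim.

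I expect the main obstacle to be justifying that the expectation passes through the power, i.e., that the $\log_k(n/d)$ factors arise from independent restart processes at distinct recursion nodes rather than from a single shared random quantity; the clean expectation recurrence $E[T(n)] = E[\xi(d)]\cdot k\cdot E[T(n/k)]$ is precisely what licenses this, and establishing it rigorously (together with confirming the per-coordinate volume assumption that gives success probability $p^d$) is the crux. Once these are in place, the computation of $E[\xi(d)]$ and the final unrolling are routine.
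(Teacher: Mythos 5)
Your proposal is correct, and the first half (the geometric-distribution argument giving $E[\xi(d)] = \tau p^{-d}$ for the $d$-dimensional subproblem) is exactly the paper's reasoning, which simply transfers the DR analysis from $n$ variables to the $d$ variables of the cutset. For the second half the paper gives no explicit argument at all: it treats the proposition as immediate from Proposition~1 by substituting $\xi(d) = \tau p^{-d}$ into $O(\frac{n}{d}\,\xi(d)^{\log_k(n/d)})$. You correctly point out that this substitution is not automatic, since $\xi(d)$ is random and appears raised to the power $\log_k(n/d)$, and you repair it by returning to the recurrence and using independence of the restart processes at distinct recursion nodes to factorize the expectation, $E[T(n)] = E[\xi(d)]\cdot k\cdot E[T(n/k)]$ (essentially a Wald-type argument, since the number of loop iterations at a node is independent of the cost of the subtrees it spawns). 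This buys a genuinely rigorous statement of the "expected complexity" claim at the price of an extra independence hypothesis that the paper leaves implicit; your flagged assumption that the global basin of each subproblem retains fractional width $p$ per coordinate is likewise implicit in the paper and is the real modeling assumption underlying the result. One minor note: your simplified recurrence drops the additive $O(n)$ and bookkeeping terms present in the paper's recurrence in Appendix~A, but as in the paper's own proof of Proposition~1 these do not affect the asymptotic bound.
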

\noindent
Rewriting the expected complexity of DR as $O(\tau {(p^{-d}})^{n/d})$ shows that \RDIS{DR} is 
exponentially more efficient than DR.

Regarding convergence, RDIS with $\epsilon = 0$ converges to the global minimum 
given certain conditions on the subspace optimizer. 
For grid search, \RDIS{GS} returns the global minimum if
the grid is finite and has sufficiently fine spacing.
For gradient descent with restarts, \RDIS{DR} will converge to stationary points of $f(\x)$
as long as steps by the subspace optimizer satisfy two technical conditions.
%For gradient descent with restarts, \RDIS{DR} 
%converges as long as only a finite number of restarts are necessary
%to reach the global basin (e.g., by grid-based restarting) and steps by the subspace optimizer 
%satisfy two technical conditions. 
The first is an Armijo rule guaranteeing sufficient decrease in $f$ and the
second guarantees a sufficient decrease in the norm of the gradient 
(see \condOne~and \condTwo~in
Appendix~\ref{app:convergence}). %the supplementary material). 
%(see conditions (3.1) and (3.3) of~\citeAY{bonettini2011inexact}).
These conditions are necessary to show that \RDIS{DR} behaves like an inexact Gauss-Seidel 
method~\cite{bonettini2011inexact}, and thus
each limit point of the generated sequence is a stationary point of $f(\x)$.
%Intuitively, to prove convergence, we show that because RDIS partitions the 
%variables into two blocks, $\x_C$ and $\x_U$, and iteratively improves the function value with respect
%to each block, then, at each recursion level, \RDIS{DR} behaves like an inexact Gauss-Seidel 
%method~\cite{bonettini2011inexact} and each limit point of the generated sequence is a 
%stationary point of $f(\x)$. \RDIS{DR} restarts each time it converges (up to a finite number) so \RDIS{DR}
%eventually halts and returns the global minimum.
Given this, we can state the probability with which \RDIS{DR} will converge to the global minimum. 
%$1- (1-(v/V)^n)^t$,
%where $t$ is the number of restarts and recall that $v^n$ is the volume
%of the global basin and $V^n$ is the total volume of the space.
%%$(v/V)^n = p^n$ is the probability of restarting in the global basin

\begin{prop}
%If one of the finite number of restarts is in the global basin, 
%the non-restart steps of DR satisfy  
%\condOne~and \condTwo, and $\epsilon = 0$, 
%%and the probability of starting in the global basin is $p$,
%then \RDIS{DR} returns the global minimum.
%%
If the non-restart steps of RDIS satisfy 
\condOne~and \condTwo, $\epsilon = 0$, the number of variables is $n$,
the volume of the global basin is $v = l^n$, and the volume of the entire space is $V = L^n$,
%and the probability of starting in the global basin is $p$, 
then \RDIS{DR} returns the global minimum after $t$ restarts, 
with probability $1 - (1-(v/V))^t$.
\end{prop}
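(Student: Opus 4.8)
The plan is to reduce the statement to a standard Bernoulli-trial calculation, once three ingredients are in place: exactness of the decomposition, convergence of the descent steps to stationary points, and the defining property of the basin of attraction. First I would observe that with $\epsilon = 0$ the simplification step (line~\ref{line:simplify}) introduces no error, so by Definition~\ref{def:structure}(b) the decomposition performed at each level is exact. Consequently, minimizing the independent sub-functions and recombining them is equivalent to minimizing $f(\x)$ itself, and the value recorded by \RDIS{DR} equals the true minimum of $f$ over the region explored. Next, I would invoke the hypothesis that the non-restart steps satisfy \condOne~and \condTwo: as noted in the text (following~\cite{bonettini2011inexact}), this makes the descent phase behave like an inexact Gauss--Seidel method, so every limit point of the sequence it generates is a stationary point of $f(\x)$.

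The key step is to connect a single random restart to convergence to the global minimum. By definition, the global basin is the set of points from which the descent method converges to the global minimizer $\x^*$; combined with the Gauss--Seidel convergence guarantee, this means that whenever a restart places the configuration inside the global basin, the subsequent descent converges to $\x^*$ and \RDIS{DR} records the optimal value $f^*$. Since each restart draws a point from the space of volume $V = L^n$, the probability that a single restart lands in the global basin of volume $v = l^n$ is exactly $v/V = p^n$, and, because the restart process is Bernoulli (as already established in the analysis of DR), successive restarts are independent.

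Finally, I would model the $t$ restarts as $t$ independent Bernoulli trials, each with success probability $v/V$. The event that \RDIS{DR} fails to return the global minimum after $t$ restarts is precisely the event that all $t$ trials miss the global basin, which has probability $(1 - v/V)^t$. Taking the complement yields the claimed success probability $1 - (1 - v/V)^t$; equivalently, this is the value at $t$ of the cumulative distribution of the (shifted) geometric distribution governing the index of the first successful restart.

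The main obstacle I anticipate is justifying the middle step rigorously in the presence of decomposition: I must ensure that RDIS's \emph{decomposed} descent --- which optimizes the components separately and recombines them --- inherits the basin-of-attraction property of the monolithic descent method, so that a restart into the global basin truly drives the full configuration to $\x^*$ rather than to a spurious stationary point. This is exactly where exactness ($\epsilon = 0$) and the Gauss--Seidel result are needed together: exact decomposition guarantees that the recombined objective agrees with $f(\x)$, while \condOne~and \condTwo~guarantee that the per-component descents push the joint configuration to a stationary point, which inside the global basin can only be the global minimizer.
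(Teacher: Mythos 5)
Your overall strategy matches the paper's: reduce the claim to (i) convergence of the alternating scheme to a stationary point via the 2-block inexact Gauss--Seidel framework of~\cite{bonettini2011inexact}, (ii) the basin-of-attraction property to upgrade ``stationary point'' to ``global minimum'' when a restart lands in the global basin, and (iii) a Bernoulli/geometric calculation giving $1-(1-(v/V))^t$. Steps (ii) and (iii) are handled exactly as in the paper.

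However, the step you yourself flag as ``the main obstacle'' is left as a gesture rather than an argument, and it is precisely the step the paper spends most of its proof on. The issue is not really the decomposition into components (which, as the paper notes, is trivial because the components are independent); it is the \emph{recursion}. At level $r$, one iteration of RDIS consists of a single subspace-optimizer step on $\x_C^{(r)}$ followed by a call to \RDIS{DR} on $f|_{\rho_C}^{(r)}(\x_U)$. To conclude that this pair of updates is a valid 2B-IGS iteration, you must know what the recursive call actually returns. The paper establishes this by induction on recursion depth: at the deepest level, \RDIS{DR} repeatedly calls $S_1$ until the stopping criterion, which equals $S_*$ and returns the exact global minimum of that sub-function --- an \emph{exact} Gauss--Seidel step, which vacuously satisfies \condOne~and \condTwo; the inductive step then assumes the level-$(r{+}1)$ call returns the global minimum of $f|_{\rho_C}^{(r)}(\x_U)$, so the level-$r$ loop is a two-block scheme in which one block is updated by a \condOne/\condTwo-compliant descent step and the other by exact minimization, hence a 2B-IGS method whose sequence converges to the stationary point of the current basin. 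Without this induction, your claim that ``the per-component descents push the joint configuration to a stationary point'' is unsupported: \condOne~and \condTwo~are hypotheses only on the non-restart steps of the subspace optimizer acting on $\x_C$, and nothing in your argument certifies that the recursive update of $\x_U$ also satisfies the conditions required by the Gauss--Seidel convergence theorem. Supplying that induction (and noting that restarts occur only after the inner sequence has converged, so they do not disrupt the IGS structure) closes the gap and recovers the paper's proof.
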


For $\epsilon > 0$, we do not yet have a proof of convergence, even in 
the convex case, since preliminary analysis indicates that there are rare corner cases in which
the alternating aspect of RDIS, combined with the simplification error, can potentially
result in a non-converging sequence of values; however, we have not experienced this
%with convergence
in practice. Furthermore, our experiments clearly show $\epsilon > 0$ to be extremely
beneficial, especially for large, highly-connected problems.

Beyond its discrete counterparts, RDIS is related to many well-known continuous optimization algorithms. 
%Intuitively, an equivalent way to think of RDIS is as a general algorithmic wrapper for
%Intuitively, RDIS can be thought of as an algorthmic wrapper for
%a subspace optimizer that allows it to scale to 
%significantly larger problems by locally decomposing the function throughout the optimization.
%Accordingly, if 
If all variables are chosen at the top level of recursion, then RDIS simply reduces to executing the 
subspace optimizer. If one level of recursion occurs, then RDIS behaves 
similarly to alternating minimization algorithms 
(which also have global convergence results~\cite{grippo1999globally}). 
For multiple levels of recursion, RDIS has 
similarities to block coordinate (gradient) descent algorithms 
(see~\citeAY{tseng2009coordinate}
and references therein). However, what sets RDIS apart is
that decomposition in RDIS is determined locally, dynamically, and recursively.
%based on the local structure of the objective function, and 
Our analysis and experiments show that exploiting this 
can lead to substantial performance improvements.

\section{RDIS Subroutines}
\label{sec:details}

In this section, we present the specific choices we've made for the subroutines in 
RDIS, but note that others are possible and we intend
to investigate them in future work.

\noindent
\textbf{Variable Selection.} 
Many possible methods exist for choosing variables. 
%and subsequently choosing values for that block of variables. 
For example, heuristics from satisfiability may be applicable 
(e.g., 
%MOMS~\cite{freeman1995improvements}, 
VSIDS~\cite{moskewicz2001chaff}).
%VSADS~\cite{sang2005heuristics}). 
However, RDIS uses hypergraph partitioning in order to ensure decomposition whenever possible.
Hypergraph partitioning splits a graph into $k$ components of approximately equal size while 
minimizing the 
number of hyperedges cut. To maximize decomposition, RDIS
should choose the smallest block of variables that, when assigned, decomposes the remaining 
variables. 
This corresponds exactly to the set of edges cut by hypergraph partitioning
on a hypergraph that has 
%a vertex for each term and a hyperedge for each
%variable that connects the terms that variable is in
%the variables selected on line~\ref{line:choosevars}. 
%For this, RDIS maintains a hypergraph with 
a vertex for each term
and a hyperedge for each variable that connects the terms that variable is in
(note that this is the inverse of Figure~\ref{fig:decomp}).
%RDIS maintains such a hypergraph.
%
%Accordingly, RDIS constructs a hypergraph $H = (V,E)$ with a vertex for each 
%term, $\{n_i \in V : f_i \in f \}$ and a hyperedge for each variable, $\{e_j \in E : x_j \in \x\}$, 
%where each hyperedge $e_j$ connects to all vertices $n_i$ for which the corresponding 
%term $f_i$ contains the variable $x_j$. Partitioning $H$, the resulting cutset will be the smallest 
%set of variables that need to be removed in order to decompose the hypergraph. And since 
%assigning a variable to a constant effectively removes it from the optimization (and the hypergraph), 
%the cutset is exactly the set that RDIS chooses on line 2. 
%
RDIS maintains such a hypergraph and uses 
the PaToH hypergraph partitioning library~\cite{patoh2011} to quickly find 
good, approximate partitions. A similar idea was used in 
\citeauthor{darwiche2001hypergraph}~\citeyear{darwiche2001hypergraph}
to construct d-trees for recursive conditioning; however, they only apply hypergraph partitioning 
once at the beginning, whereas RDIS performs it at each level of the recursion. 

While variable selection
could be placed inside the loop, it would repeatedly choose the same variables
because hypergraph partitioning is based on the graph structure.
However, RDIS still exploits local decomposition because the variables and terms 
at each level of recursion vary based on local structure. In addition, 
edge and vertex weights
%can be non-uniform and 
could be set 
based on current bounds or other local information.

\noindent
\textbf{Value Selection.}
%use a subspace optimizer. it chooses values to assign. currently just use nonconvex optimizers such
%as grid search or gradient descent. DR much more effective.
RDIS can use any nonconvex optimization subroutine to choose values, allowing the user to 
pick an optimizer appropriate to their domain.
In our experiments, we use multi-start versions of both conjugate gradient descent and 
Levenberg-Marquardt, but other possibilities include Monte Carlo search,
quasi-Newton methods, and simulated annealing. 
%The simplest such mechanism
%is grid search over the domains of each block of variables, but we found it to be practical
%only in easy problems. 
We experimented with both grid search and branch and bound, 
but found them practical only for easy problems.
%Our successes with iterative descent methods indicate that
%our underlying mechanism, recursive local decomposition, is effective. 
%The remainder of this section details
%how the subspace optimizer ($S$) is used within RDIS.
%
%At each level of recursion, RDIS first chooses values for $\x_C$ by running  
%%taking a step (or multiple) with 
%the subspace optimizer ($S$) from the current values (line~\ref{line:ssopt}).
%%If $S$ is run from the same values twice in a row, it restarts before running again.
%%If $S$ converges without any change to the values, it restarts and runs again,
%%since these have already been examined.
%If $\x_U$ is nonempty, RDIS recurses; otherwise, $S$ is repeatedly
%called on $f(\x_C)$ until the stopping criterion is satisfied, at which point it has (ideally)
%found the global optimum, since $S$ is a nonconvex optimizer.
%If $\x$ is large enough, RDIS recurses;
%%to get the global
%%minimum of the simplified sub-functions. Otherwise, 
%otherwise, RDIS obtains the global minimum of the
%simplified sub-function by repeatedly calling $S$ until
%convergence, which returns the global optimum since $S$ is a nonconvex optimizer. 
%%Note that all 
%%optimization in RDIS happens when choosing values for $\x_C$, just at different levels of recursion.
%%
%If $S$ converges at any level of recursion then $S$ uniformly randomly samples a new point
%from the domain of $\x_C$ (restarts) and then takes a step with $S$. 
In our experiments, we have found it helpful to stop
the subspace optimizer early, because values are likely to change again in the next iteration, making
quick, approximate improvement more effective than slow, exact improvement.
%If RDIS is run on a convex function, it is unnecessary for $S$ to restart,

\noindent
\textbf{Simplification and Decomposition.} 
Simplification is performed by checking whether each term (or factor) is simplifiable
and, if it is, setting it to a constant and removing it from the function.
%(at this and all lower levels of
%recursion). 
RDIS knows the analytical form of the function and uses 
interval arithmetic~\cite{hansen2003global} as a general
method for computing and maintaining bounds on terms to determine simplifiability.
%RDIS uses interval arithmetic based on the analytical form of the function;
%RDIS maintains bounds on terms 
%%are maintained throughout the optimization 
%to determine simplifiability. We use interval arithmetic~\cite{hansen2003global} 
%as a general mechanism for computing bounds.
%on an arbitrary function.
RDIS maintains the connected
components of a dynamic 
graph~\cite{holm2001poly} over the variables 
and terms (equivalent in structure to a factor or co-occurrence graph). 
Components in RDIS correspond exactly to the 
connected components in this graph. %, each containing dependent groups of variables and terms. 
Assigned variables and simplified terms are removed from this graph,
%When variables are assigned or terms simplified, they are removed from this graph. % since they are now constant.
%and no
%longer play a role in the function. 
%Similarly, simplified terms are also removed from this
%graph, 
potentially inducing local decomposition.

%components = independent variables + their terms/subfunctions.
%given epsilon, simplification involves checking those factors in the current component (subfunction).
%decomposition involves maintaining a dynamic graph of factors+variables and keeping track of the connected
%components -- these define the components.

\noindent
\textbf{Caching and Branch \& Bound.}
%\subsection{Caching and Branch \& Bound}
RDIS' similarity to model counting algorithms suggests the use of
component caching and branch and bound (BnB). 
We experimented with these and found them effective when used with grid 
search; however, they were not
beneficial when used with descent-based subspace 
optimizers, which dominate grid-search-based RDIS on non-trivial problems.
% (even when caching and BnB are used).
For caching, this is because components are almost never seen again, due to not 
re-encountering variable values, even approximately. For BnB, 
interval arithmetic bounds tended to be overly loose and no bounding occurred.
%BnB, with component bounds
%computed from the simplification bounds, has not improved performance. 
Our experience suggests that this is because the descent-based optimizer effectively 
focuses exploration on the minima of the space, which are typically close in value
to the current optimum.
However, we believe that future work on caching and better bounds
would be beneficial.

\section{Experimental Results}
\label{sec:expres}

We evaluated RDIS on three difficult nonconvex optimization problems with hundreds
to thousands of variables: structure from motion,
a high-dimensional sinusoid, and protein folding. Structure from motion is an important problem
in vision, while protein folding 
is a core problem in computational biology.
%is a MAP inference task.
%Each of these problems has hundreds or thousands of variables
%For our experiments,
We ran RDIS with a fixed number of restarts at each level, thus not guaranteeing that
we found the global minimum.
%exploring exponentially 
%more modes than non-decomposition-based algorithms, but not guaranteeing that 
%we found the global minimum.
For structure from motion, we compared RDIS to that domain's standard technique of 
Levenberg-Marquardt (LM)~\cite{nocedal06} using the levmar 
library~\cite{lourakis04LM}, as well as to a block-coordinate descent version (BCD-LM).
In protein folding, gradient-based methods 
are commonly used to determine the lowest energy configuration of a protein,
so we compared RDIS to conjugate gradient descent (CGD) and 
a block-coordinate descent version (BCD-CGD).
CGD and BCD-CGD were also used for the high-dimensional sinusoid.
%Gradient descent methods with restarts
%are commonly used in protein folding
Blocks were formed
by grouping contextually-relevant variables together (e.g., in protein folding, 
we never split up an amino acid).
We also compared to ablated versions of RDIS.
RDIS-RND uses a random variable selection heuristic 
%instead of hypergraph partitioning 
and RDIS-NRR does not use any internal random restarts (i.e., it functions as
a convex optimizer) but does have top-level restarts. In each domain, the optimizer we
compare to was also used as the subspace optimizer in RDIS.
%We did not compare to any global, complete algorithms because the problems
All experiments were run on the same cluster. Each computer in the cluster 
was identical, with two 2.33GHz quad core Intel Xeon E5345 processors and 16GB of RAM.
Each algorithm was limited to a single thread. 
Further details can be found in Appendix~\ref{app:experiments}. %the supplementary material.

\noindent
\textbf{Structure from Motion.}
Structure from motion is the problem of reconstructing the geometry of a 3-D scene
from a set of 2-D images of that scene. It consists of first determining an initial estimate
of the parameters and then performing non-linear optimization to
minimize the squared error between a set of 2-D image points and a projection of the 3-D points
onto camera models~\cite{triggs00bundle}. The latter, known as bundle adjustment,
is the task we focus on here.
%the 
%The first domain
%is bundle adjustment~\cite{triggs00bundle}.
%The goal is to minimize the 
%squared error between a set of 2-D image points and a projection of 3-D points 
%from a scene's geometry onto camera models. 
%The variables are the parameters of the cameras and the positions of 
%the points and the cameras. 
Global structure exists, since cameras interact 
explicitly with points, creating a bipartite graph structure that RDIS can decompose, but (nontrivial) local 
structure does not exist because the bounds on each term are too wide and tend to 
include $\infty$. 
%We note that our goal here is not to present a new bundle adjustment 
%method, since there are algorithms optimized specifically for it, but to demonstrate RDIS'
%ability to perform well
%%outperform standard nonconvex optimization techniques 
%%even without local decomposition.
%with only global structure.
The dataset 
used is the 49-camera, 7776-point data file from the Ladybug 
dataset~\cite{agarwal2010bundle}

\begin{figure}[tb]
%\vskip 0.2in
%\centering
\includegraphics[width=\columnwidth]{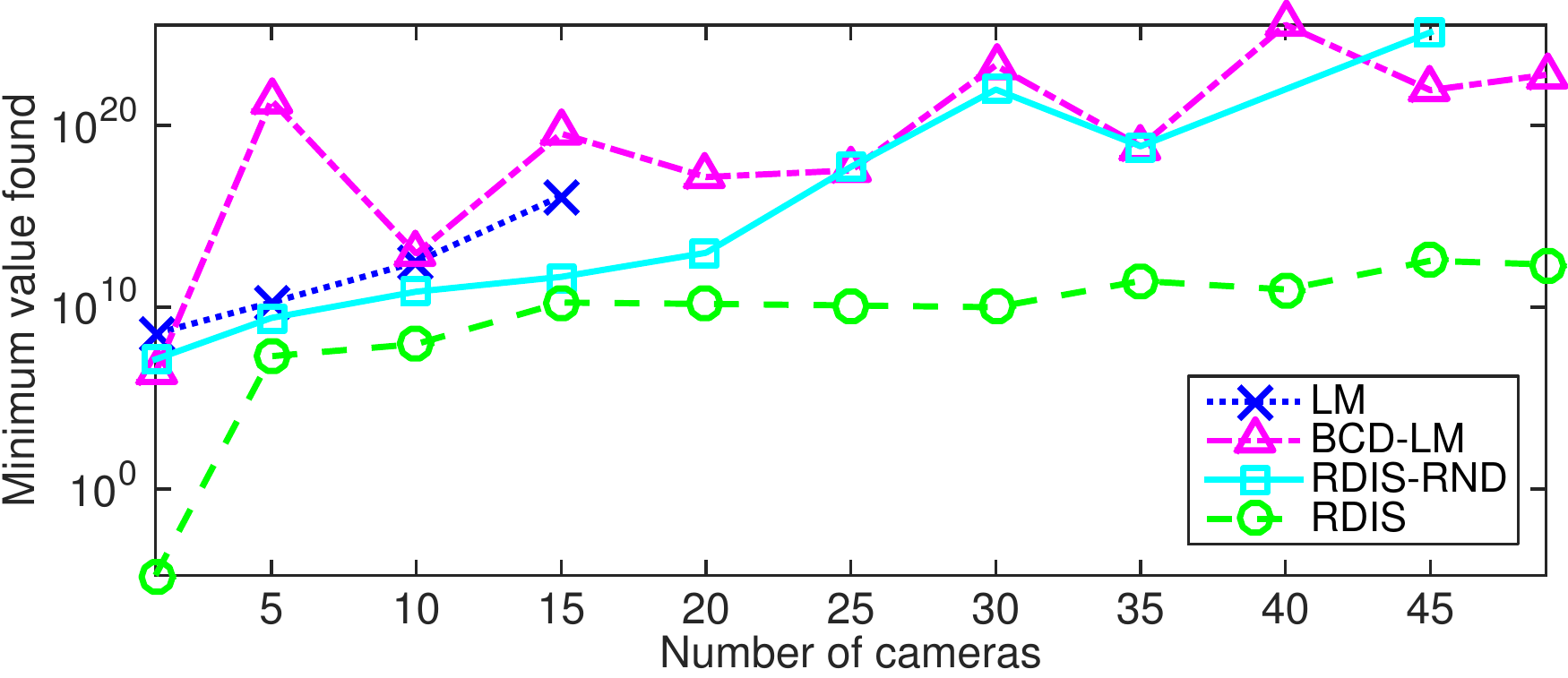}
\caption{\small{Minimum value found in one hour for increasing sizes of bundle adjustment 
problem (y-axis is log scale).}
}
%\vspace{-1em}
\label{fig:bundle}
\end{figure} 

Figure~\ref{fig:bundle} shows performance on bundle adjustment
as a function of the size of the problem, with a log scale y-axis. Each point is the 
minimum error found after running each algorithm
for $5$ hours. Each algorithm is given the same set of restart states, but algorithms that converge faster 
may use more of these. 
%Note that the y-axis is log-scaled. 
Since no local structure is exploited, Figure~\ref{fig:bundle} effectively demonstrates the benefits of using
recursive decomposition with intelligent variable selection for nonconvex optimization. Decomposing
the optimization across independent subspaces allows the subspace optimizer to move faster, 
further, and more consistently, allowing
RDIS to dominate the other algorithms. 
%%BCD-LM
%%does poorly because its blocks are not independent and the optimizer ends up repeatedly
%%making minor adjustments to blocks and converging to non-minima. RDIS-RND performs decently
%%at first, but is unable to scale well due to poor decomposition. 
%For problems with $20$ or more cameras,
%LM did not even complete $20$ iterations (our step size for reporting values) 
%in the allotted time.  
Missing points are due to algorithms not returning any results in the allotted time.
%For those problems on which it returned results, LM performed
%comparably to BCD-LM but more consistently.

%2. bundle adjustment figure. global structure only. does a good job -- shows our heuristic is effective. 
%we know there's lots of structure in this problem already.  showing decomposition is good. 
%(i.e., recursive structure + var. selection heuristics = decomposition $\rightarrow$ good performance )

\noindent
\textbf{High-dimensional Sinusoid.} 
The second domain is
a highly-multimodal test function defined as a multidimensional sinusoid placed in the basin of a
quadratic, with a small slope to make the global minimum unique. 
%A 2-D example is shown in the supplementary material. 
The arity of this function 
(i.e., the number of variables contained in each term) is controlled parametrically. 
Functions with larger arities contain more terms and dependencies, and thus 
are more challenging.
%create more terms in the function and more complex dependencies between variables, 
%resulting in a more difficult optimization. 
A small amount of local structure exists in this problem.
%The specific formulation of this problem ensures that no 
%local decomposition occurs, but global decomposition is possible. 

%\begin{figure}[b]
%%\vskip 0.2in
%%\centering
%\includegraphics[width=\columnwidth]{figures/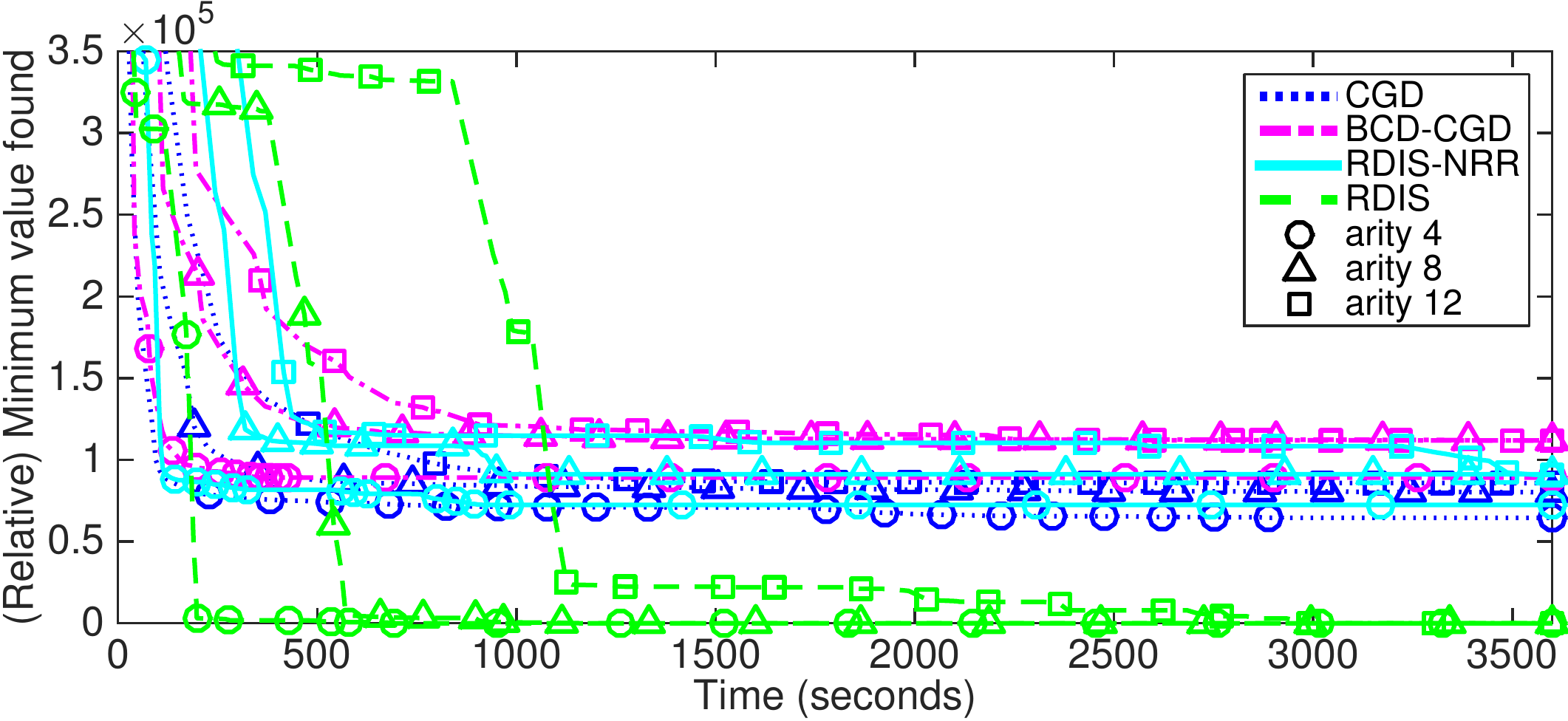}
%\caption{\small{A comparison of the best minima found as a function of time for three different 
%complexity levels of the high-dimensional sinusoid, normalized to the actual best value found for each arity.}
%}
%\label{fig:testfunc}
%\end{figure} 

In Figure~\ref{fig:testfunc}, we show the current best value found versus time. 
Each datapoint is from a single run of an algorithm using the same set of top-level restarts,
although, again, algorithms that converge faster use more of these. RDIS outperforms all
other algorithms, including RDIS-NRR. This is due to the nested restart behavior afforded by
recursive decomposition, which allows RDIS to effectively explore each subspace and escape local minima.
%be exponentially more efficient than competing
%algorithms by effectively exploring each suspace and avoiding local minima. 
The poor initial performance of RDIS for arities $8$ and $12$ is due to it being trapped
in a local minimum for an early variable assignment while performing optimizations lower in the recursion. 
However, once the low-level recursions finish it
escapes and finds the best minimum without ever performing a top level restart
(Figure~\ref{fig:trajectories} in Appendix~\ref{app:testfunc} contains the full trajectories).
%(Figure~2 in the supplementary material contains the full trajectories).

%\textbf{***********. Summary... Need new plot though. 
%Show RDIS-STATIC and RDIS-NRR here? RDIS-STATIC uses static + no simplification. Ideally would show the 
%latter for p.f.?}

%Figure 3. Test function results showing our heuristic is good (versus RDIS-RAND) 
%-- no simplification, just global structure. RDIS does great (uses RRs) against BCD-CGD 
%and CGD and RDIS-RAND. Min trajectories? Average trajectories? Both? Error bars?
%
%3. test function. minor amounts of simplification, showing performance of RDIS over time.

\noindent
\textbf{Protein Folding.}
The final domain is sidechain placement for protein folding 
with continuous angles between atoms. Amino acids are composed
of a backbone segment and a sidechain. Sidechain placement requires setting the sidechain
angles with the backbone atoms fixed. It is equivalent to finding the MAP
assignment of a continuous pairwise Markov random field 
(cf.,~\citeauthor{yanover2006linear}~[\citeyear{yanover2006linear}]).
%, where the conformations follow a Boltzmann distribution. 
Significant local structure is present in this domain.
Test proteins were selected from the Protein Data Bank~\cite{Berman2000}
with sequence length 300-600 such that the sequences of any two
did not overlap by more than 30\%.

%\textbf{thousands of variables...}

%Figure 4. Continuous sidechain placement showing RDIS-NRR vs. $\epsilon$ in both performance and 
%time. Shows how simplification occurs and helps out.
%
%4. protein folding epsilon figure. showing effect of epsilon + simplification + locality on performance and time.
%(i.e., recursive assignment structure $\rightarrow$ simplification $\rightarrow$ good performance)

\begin{figure}[tb]
%
%\begin{subfigure}{}
\includegraphics[width=\columnwidth]{}
\caption{\small{
A comparison of the best minima found as a function of time for three different 
arities of the high-dimensional sinusoid.%, normalized to the actual best value found for each arity.
%A comparison of the best minima found as a function of time for three different 
%complexity levels of the high-dimensional sinusoid, normalized to the actual best value found for each arity.
}}
\label{fig:testfunc}
%\end{subfigure}
\end{figure}

%\hfill

\begin{figure}[tb]
%\begin{subfigure}{}
\includegraphics[width=\columnwidth]{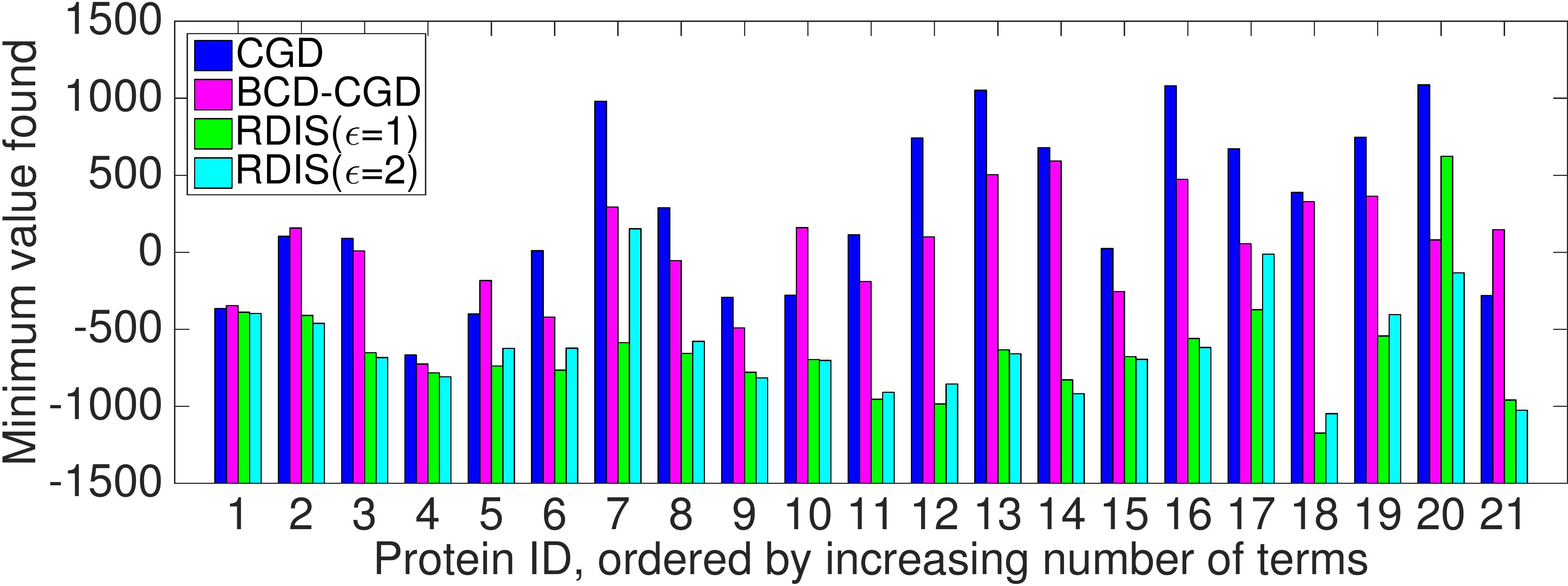}
\caption{\small{
Minimum value (energy) found on $21$ different proteins.
%Performance on a test set of $21$ proteins of varying sizes, ordered by increasing size (number of terms). 
%Each algorithm is run for 48 hours on each protein and the minimum energy is shown.
}}
\label{fig:proteins}
%\end{subfigure}
%
%\vspace{-10pt}
\end{figure} 

Figure~\ref{fig:proteins} shows the results of combining all aspects of RDIS, 
including recursive decomposition, 
intelligent variable selection, internal restarts, and local structure on a difficult problem with significant local 
structure. Each algorithm is run for $48$ hours on each of $21$ proteins of varying sizes. RDIS is run
with both $\epsilon = 1.0$ and $\epsilon = 2.0$ and both results are shown on the figure.
%These and the other parameters were chosen by hand-tuning on a few
%smaller proteins. 
RDIS outperforms CGD and BCD-CGD on all proteins, often by a very large amount. 
%Depending on the structure of the protein, 
%Figure~\ref{fig:proteins} demonstrates that RDIS is not highly sensitive to its parameters,
%given the consistency in performance for the same parameter set across many different problem sizes. 
%RDIS significantly outperforms CGD and BCD-CGD on almost all of the proteins. 
%The one instance where BCD-CGD
%performs marginally better is due to BCD-CGD restarting in a 
%better basin than any of the 
%basins that RDIS explores, due to experimental randomness. 
%%However, this is clearly an uncommon 
%%event given the results on the other proteins.
%%Figure~3 in the Supplementary Material shows the effect of $\epsilon$
%%on both the minimum found and the time taken.
%%\vspace{-0.2em}

Figure~\ref{fig:epsproteins} demonstrates the effect of $\epsilon$ on RDIS for protein folding. It
shows the performance of RDIS-NRR as a function of 
$\epsilon$, where performance is measured both by minimum 
energy found and time taken. 
RDIS-NRR is used in order to remove the randomness associated with the internal
restarts of RDIS, resulting in a more accurate comparison across multiple runs.
%because the internal restarting of RDIS is entirely random, and would
%not result in an accurate comparison across multiple different runs.
Each point on the 
energy curve is the minimum energy found over the same $20$ restarts. 
Each point on the time
curve is the total time taken for all $20$ restarts. As $\epsilon$ increases, 
time decreases because
more local structure is being exploited. %due to simplification and decomposition. 
In addition, minimum energy actually
decreases initially.  % until it starts to increase when $\epsilon$ reaches $10$. 
We attribute this to the smoothing caused
by increased simplification, allowing RDIS to avoid minor local minima in the objective function.

\begin{figure}[t]
%\vskip 0.2in
%\begin{center}
%\centerline{
\includegraphics[width=\columnwidth]{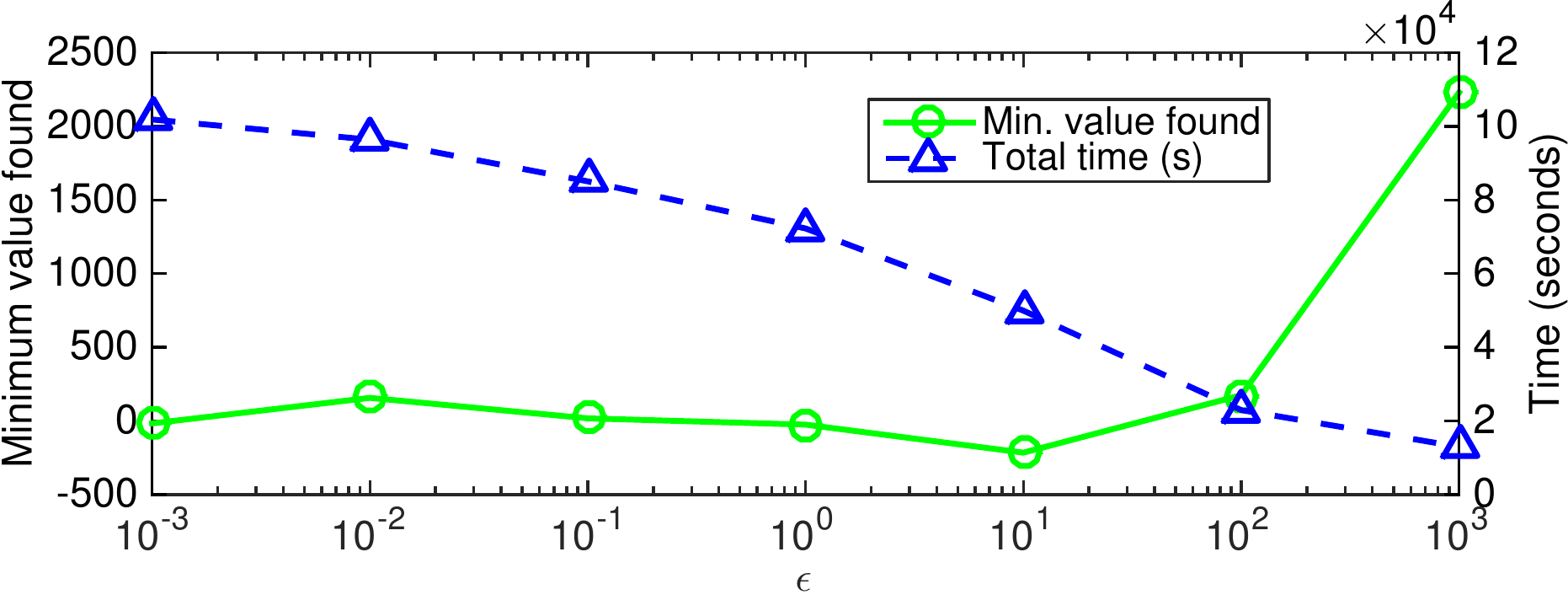}
%}
\caption{\small{RDIS-NRR's minimum energy found and total time taken versus $\epsilon$, on 
protein 3EEQ (ID 9). The x-axis is log scale. 
%Each datapoint is the minimum found after running RDIS-NRR from the same $20$ unique initial points.
%%We have added 1500 to all energies so that all bars point in a consistent direction, in order to help 
%%with interpreting the figure and comparing between algorithms, since the relative values are what are important.
%%The energies shown have all had 1500 added to them so that the bars are all in a consistent direction.
}}
\label{fig:epsproteins}
%\end{center}
%\vskip -0.2in
\end{figure}

\section{Conclusion}

This paper proposed a new approach to solving hard nonconvex optimization problems based on 
recursive decomposition. 
RDIS decomposes the function into approximately locally
independent sub-functions and then optimizes these separately by recursing on them.
This results in an exponential reduction in the time required to find the global optimum.
%enabling them to be separately optimized and exponentially reducing the time
%required to find the global optimum. 
In our experiments, we show that problem decomposition
enables RDIS to systematically outperform comparable methods.
%non- or weak-decomposition methods.

Directions for future research include applying RDIS to a wide variety of nonconvex optimization problems,
further analyzing its theoretical properties, developing new variable and value selection methods,
extending RDIS to handle hard constraints, incorporating discrete variables, and
using similar ideas for high-dimensional integration.

\section*{Acknowledgments}

%Thanks to Mathias Niepert and Daniel Lowd for feedback on earlier versions.
This research was partly funded by ARO grant W911NF-08-1-0242,
ONR grants N00014-13-1-0720 and N00014-12-1-0312, and AFRL contract
FA8750-13-2-0019. The views and conclusions contained in this
document are those of the authors and should not be interpreted as
necessarily representing the official policies, either expressed or
implied, of ARO, ONR, AFRL, or the United States Government.
%\looseness=-1

%\appendix

%\section{\LaTeX{} and Word Style Files}\label{stylefiles}
%
%The \LaTeX{} and Word style files are available on the IJCAI--15
%website, {\tt http://www.ijcai-15.org/}.
%These style files implement the formatting instructions in this
%document.
%
%The \LaTeX{} files are {\tt ijcai15.sty} and {\tt ijcai15.tex}, and
%the Bib\TeX{} files are {\tt named.bst} and {\tt ijcai15.bib}. The
%\LaTeX{} style file is for version 2e of \LaTeX{}, and the Bib\TeX{}
%style file is for version 0.99c of Bib\TeX{} ({\em not} version
%0.98i). The {\tt ijcai15.sty} file is the same as the {\tt
%ijcai07.sty} file used for IJCAI--07.
%
%The Microsoft Word style file consists of a single file, {\tt
%ijcai15.doc}. This template is the same as the one used for
%IJCAI--07.
%
%These Microsoft Word and \LaTeX{} files contain the source of the
%present document and may serve as a formatting sample.  
%
%Further information on using these styles for the preparation of
%papers for IJCAI--15 can be obtained by contacting {\tt
%pcchair15@ijcai.org}.

%% The file named.bst is a bibliography style file for BibTeX 0.99c
\bibliographystyle{named}
%\small{
\bibliography{ncopt}

\begin{thebibliography}{}

\bibitem[\protect\citeauthoryear{Agarwal \bgroup \em et al.\egroup
  }{2010}]{agarwal2010bundle}
Sameer Agarwal, Noah Snavely, Steven~M. Seitz, and Richard Szeliski.
\newblock Bundle adjustment in the large.
\newblock In Kostas Daniilidis, Petros Maragos, and Nikos Paragios, editors,
  {\em Computer Vision Ð ECCV 2010}, volume 6312 of {\em Lecture Notes in
  Computer Science}, pages 29--42. Springer Berlin Heidelberg, 2010.

\bibitem[\protect\citeauthoryear{Anfinsen}{1973}]{anfinsen73}
Christian~B. Anfinsen.
\newblock Principles that govern the folding of protein chains.
\newblock {\em Science}, 181(4096):223--230, 1973.

\bibitem[\protect\citeauthoryear{Bacchus \bgroup \em et al.\egroup
  }{2009}]{Bacchus2009}
Fahiem Bacchus, Shannon Dalmao, and Toniann Pitassi.
\newblock {Solving \#SAT and Bayesian Inference with Backtracking Search}.
\newblock {\em Journal of Artificial Intelligence Research}, 34:391--442, 2009.

\bibitem[\protect\citeauthoryear{Baker}{2000}]{baker00}
David Baker.
\newblock A surprising simplicity to protein folding.
\newblock {\em Nature}, 405:39--42, 2000.

\bibitem[\protect\citeauthoryear{{Bayardo~Jr.} and Pehoushek}{2000}]{bayardo00}
Roberto~J. {Bayardo~Jr.} and Joseph~Daniel Pehoushek.
\newblock Counting models using connected components.
\newblock In {\em Proceedings of the Seventeenth National Conference on
  Artificial Intelligence}, pages 157--162, 2000.

\bibitem[\protect\citeauthoryear{Berman \bgroup \em et al.\egroup
  }{2000}]{Berman2000}
Helen~M. Berman, John Westbrook, Zukang Feng, Gary Gilliland, T.~N. Bhat, Helge
  Weissig, Ilya~N. Shindyalov, and Philip~E. Bourne.
\newblock The protein data bank.
\newblock {\em Nucleic Acids Research}, 28(1):235--242, 2000.

\bibitem[\protect\citeauthoryear{Bonettini}{2011}]{bonettini2011inexact}
Silvia Bonettini.
\newblock Inexact block coordinate descent methods with application to
  non-negative matrix factorization.
\newblock {\em IMA Journal of Numerical Analysis}, 31(4):1431--1452, 2011.

\bibitem[\protect\citeauthoryear{Cassioli \bgroup \em et al.\egroup
  }{2013}]{cassioli2013convergence}
A~Cassioli, D~Di~Lorenzo, and M~Sciandrone.
\newblock On the convergence of inexact block coordinate descent methods for
  constrained optimization.
\newblock {\em European Journal of Operational Research}, 231(2):274--281,
  2013.

\bibitem[\protect\citeauthoryear{{\c C}ataly{\"u}rek and
  Aykanat}{2011}]{patoh2011}
{\"U}mit {\c C}ataly{\"u}rek and Cevdet Aykanat.
\newblock {PaToH} (partitioning tool for hypergraphs).
\newblock In David Padua, editor, {\em Encyclopedia of Parallel Computing},
  pages 1479--1487. Springer US, 2011.

\bibitem[\protect\citeauthoryear{Darwiche and
  Hopkins}{2001}]{darwiche2001hypergraph}
Adnan Darwiche and Mark Hopkins.
\newblock Using recursive decomposition to construct elimination orders,
  jointrees, and dtrees.
\newblock In {\em Symbolic and Quantitative Approaches to Reasoning with
  Uncertainty}, pages 180--191. Springer, 2001.

\bibitem[\protect\citeauthoryear{Darwiche}{2001}]{darwiche01recursive}
Adnan Darwiche.
\newblock Recursive conditioning.
\newblock {\em Artificial Intelligence}, 126(1-2):5--41, 2001.

\bibitem[\protect\citeauthoryear{Davis \bgroup \em et al.\egroup
  }{1962}]{davis1962machine}
Martin Davis, George Logemann, and Donald Loveland.
\newblock A machine program for theorem-proving.
\newblock {\em Communications of the ACM}, 5(7):394--397, 1962.

\bibitem[\protect\citeauthoryear{De~Moura and Bj{\o}rner}{2011}]{DeMoura2011}
Leonardo De~Moura and Nikolaj Bj{\o}rner.
\newblock Satisfiability modulo theories: Introduction and applications.
\newblock {\em Communications of the ACM}, 54(9):69--77, September 2011.

\bibitem[\protect\citeauthoryear{Griewank and
  Toint}{1981}]{griewank1981partiallysep}
Andreas Griewank and Philippe~L. Toint.
\newblock On the unconstrained optimization of partially separable functions.
\newblock {\em Nonlinear Optimization}, 1982:247--265, 1981.

\bibitem[\protect\citeauthoryear{Grippo and
  Sciandrone}{1999}]{grippo1999globally}
Luigi Grippo and Marco Sciandrone.
\newblock Globally convergent block-coordinate techniques for unconstrained
  optimization.
\newblock {\em Optimization Methods and Software}, 10(4):587--637, 1999.

\bibitem[\protect\citeauthoryear{Hansen and Walster}{2003}]{hansen2003global}
Eldon Hansen and G.~William Walster.
\newblock {\em Global optimization using interval analysis: revised and
  expanded}, volume 264.
\newblock CRC Press, 2003.

\bibitem[\protect\citeauthoryear{Holm \bgroup \em et al.\egroup
  }{2001}]{holm2001poly}
Jacob Holm, Kristian De~Lichtenberg, and Mikkel Thorup.
\newblock Poly-logarithmic deterministic fully-dynamic algorithms for
  connectivity, minimum spanning tree, 2-edge, and biconnectivity.
\newblock {\em Journal of the ACM (JACM)}, 48(4):723--760, 2001.

\bibitem[\protect\citeauthoryear{Leaver-Fay \bgroup \em et al.\egroup
  }{2011}]{leaver2011rosetta3}
Andrew Leaver-Fay, Michael Tyka, Steven~M Lewis, Oliver~F Lange, James
  Thompson, Ron Jacak, Kristian Kaufman, P~Douglas Renfrew, Colin~A Smith, Will
  Sheffler, et~al.
\newblock {ROSETTA3}: an object-oriented software suite for the simulation and
  design of macromolecules.
\newblock {\em Methods in Enzymology}, 487:545--574, 2011.

\bibitem[\protect\citeauthoryear{Lourakis}{2004}]{lourakis04LM}
M.I.A. Lourakis.
\newblock levmar: {L}evenberg-{M}arquardt nonlinear least squares algorithms in
  {C}/{C}++.
\newblock \url{http://www.ics.forth.gr/~lourakis/levmar/}, 2004.

\bibitem[\protect\citeauthoryear{Moskewicz \bgroup \em et al.\egroup
  }{2001}]{moskewicz2001chaff}
Matthew~W. Moskewicz, Conor~F. Madigan, Ying Zhao, Lintao Zhang, and Sharad
  Malik.
\newblock Chaff: Engineering an efficient sat solver.
\newblock In {\em Proceedings of the 38th annual Design Automation Conference},
  pages 530--535. ACM, 2001.

\bibitem[\protect\citeauthoryear{Neumaier \bgroup \em et al.\egroup
  }{2005}]{neumaier2005}
Arnold Neumaier, Oleg Shcherbina, Waltraud Huyer, and Tam‡s Vink—.
\newblock A comparison of complete global optimization solvers.
\newblock {\em Mathematical Programming}, 103(2):335--356, 2005.

\bibitem[\protect\citeauthoryear{Nocedal and Wright}{2006}]{nocedal06}
Jorge Nocedal and Stephen~J Wright.
\newblock {\em Numerical Optimization}.
\newblock Springer, 2006.

\bibitem[\protect\citeauthoryear{Sang \bgroup \em et al.\egroup
  }{2004}]{sang2004combining}
Tian Sang, Fahiem Bacchus, Paul Beame, Henry~A. Kautz, and Toniann Pitassi.
\newblock Combining component caching and clause learning for effective model
  counting.
\newblock {\em Seventh International Conference on Theory and Applications of
  Satisfiability Testing}, 2004.

\bibitem[\protect\citeauthoryear{Sang \bgroup \em et al.\egroup
  }{2005}]{sang2005performing}
Tian Sang, Paul Beame, and Henry Kautz.
\newblock Performing {B}ayesian inference by weighted model counting.
\newblock In {\em Proceedings of the Twentieth National Conference on
  Artificial Intelligence (AAAI-05)}, volume~1, pages 475--482, 2005.

\bibitem[\protect\citeauthoryear{Schoen}{1991}]{fabio1991}
Fabio Schoen.
\newblock Stochastic techniques for global optimization: A survey of recent
  advances.
\newblock {\em Journal of Global Optimization}, 1(3):207--228, 1991.

\bibitem[\protect\citeauthoryear{Trifunovi{\'
  c}}{2006}]{trifunovic2006parallel}
Aleksandar Trifunovi{\' c}.
\newblock {\em Parallel algorithms for hypergraph partitioning}.
\newblock {Ph.D.}, University of London, February 2006.

\bibitem[\protect\citeauthoryear{Triggs \bgroup \em et al.\egroup
  }{2000}]{triggs00bundle}
Bill Triggs, Philip~F. McLauchlan, Richard~I. Hartley, and Andrew~W.
  Fitzgibbon.
\newblock Bundle adjustment -- a modern synthesis.
\newblock In {\em Vision Algorithms:~Theory and Practice}, pages 298--372.
  Springer, 2000.

\bibitem[\protect\citeauthoryear{Tseng and Yun}{2009}]{tseng2009coordinate}
Paul Tseng and Sangwoon Yun.
\newblock A coordinate gradient descent method for nonsmooth separable
  minimization.
\newblock {\em Mathematical Programming}, 117(1-2):387--423, 2009.

\bibitem[\protect\citeauthoryear{Yanover \bgroup \em et al.\egroup
  }{2006}]{yanover2006linear}
Chen Yanover, Talya Meltzer, and Yair Weiss.
\newblock Linear programming relaxations and belief propagation -- an empirical
  study.
\newblock {\em The Journal of Machine Learning Research}, 7:1887--1907, 2006.

\end{thebibliography}
%}

\newpage
~
\newpage
\begin{appendices}

\renewcommand\thesection{\Alph{section}}
%\titleformat{\section}{\normalfont\Large\bfseries}{\appendixname~\thesection.}{1em}{}

%This supplement to the paper \emph{``Recursive Decomposition for Nonconvex Optimization,''},
%published in IJCAI 2015, provides proofs for the results in the paper, additional
%implementation details on RDIS, and additional experimental 
%details for the results shown in the paper. 
%The experimental details include additional figures that
%did not fit in the space allotted to the main paper.

\section{Analysis Details}
\label{app:analysis}
%In this section we present proofs of Theorems~\ref{prop:complexity2} and~\ref{thm:convergence2}.

\subsection{Complexity}

RDIS begins by choosing a block of variables, $\x_C$. Assuming that this choice is made 
heuristically using the PaToH library for hypergraph partitioning, which is a multi-level technique, 
then the complexity of choosing variables is linear 
%in the number of vertices and edges 
(see \citeauthor{trifunovic2006parallel}~\citeyear{trifunovic2006parallel}, p.~81). 
Within the loop, RDIS chooses values for $\x_C$, 
simplifies and decomposes the function, and finally recurses. Let the complexity of choosing values 
using the subspace optimizer be $g(d)$, where $|\x_C| = d$, and let one call to
the subspace optimizer be cheap relative to $n$ (e.g., computing the gradient of $f$ with respect to $\x_C$
or taking a step on a grid).
%Since choosing values is dependent 
%on the cost of the subspace optimizer, we say its cost is $g(d)$, where $d$ is the dimensionality of 
%$x_c$. 
Simplification requires iterating through the set of terms and computing bounds, so is linear in the
number of terms, $m$. The connected components are maintained by a dynamic graph 
algorithm~\cite{holm2001poly} which has an amortized complexity of~$O(\log^2(|V|))$ per operation, 
where $|V|$ is the number of vertices in the graph. Finally, let the number of iterations of 
the loop be a function of the dimension, $\xi(d)$, since more dimensions generally require
more restarts.
%we state and prove the following theorem.
%
\begin{prop}
If, at each level, RDIS chooses $\x_C \subseteq \x$ of size 
$| \x_C | = d$ such that, for each 
selected value $\rho_C$, the simplified function $\hat{f}|_{\rho_C}(\x_U)$
%remaining variables $\x_U = \x \backslash \x_C$ 
locally decomposes into $k > 1$ independent sub-functions $\{ \hat{f}_i(\x_{U_i}) \}$
%components of
with equal-sized domains 
%$\x_{U_1}, \dots, \x_{U_k}$, 
$\x_{U_i}$,
then 
the time complexity of RDIS is $O(\frac{n}{d} \xi(d)^{\log_k{(n/d)}})$.
\label{prop:complexity2}
\end{prop}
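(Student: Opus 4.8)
The plan is to set up and solve a recurrence for the running time $T(N)$ of a single RDIS invocation on a subproblem with $N$ variables, and then instantiate it at $N=n$. From the algorithm and the cost accounting preceding the statement, one invocation on $N$ variables calls \textsc{chooseVars} once (linear in $N$, hence cheap) and then executes its main loop $\xi(d)$ times; each loop iteration selects values with the subspace optimizer (cost $g(d)$, cheap relative to $n$), simplifies (linear in the number of terms $m$), maintains the connected components via the dynamic-graph algorithm ($O(\log^2 N)$ amortized per update), and recurses on each of the $k$ independent sub-functions. By hypothesis the sub-functions have equal-sized domains, so each recursive call is on $(N-d)/k$ variables. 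This yields
\[
T(N) \;=\; \xi(d)\,\bigl[\,k\,T\bigl((N-d)/k\bigr) + c(N)\,\bigr] + a(N), \qquad T(d) = O(1),
\]
where $a(N)$ is the one-time variable-selection cost, $c(N)$ collects the per-iteration overhead, and the base case $T(d)=O(1)$ corresponds to \textsc{chooseVars} selecting all of $\x$ so that no further recursion occurs.

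Next I would fix the recursion depth. Writing $N_0=n$ and $N_{\ell+1}=(N_\ell-d)/k$, the subtraction of $d$ is negligible against division by $k$: the recurrence has the negative fixed point $-d/(k-1)$, so $N_\ell=\Theta(n/k^\ell)$ over the relevant range and the recursion bottoms out at $N_L=d$ after $L=\log_k(n/d)$ levels. I would then count level by level: a single level-$0$ invocation spawns $\xi(d)\,k$ invocations at level $1$, each of which spawns $\xi(d)\,k$ more, giving $(\xi(d)\,k)^{\ell}$ invocations at level $\ell$. Ignoring the dominated overhead for the moment, unrolling to the base case gives $T(n)=(\xi(d)\,k)^{L}\,T(d)$. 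Using $k^{L}=k^{\log_k(n/d)}=n/d$ and $\xi(d)^{L}=\xi(d)^{\log_k(n/d)}$, this is exactly $O\bigl(\tfrac{n}{d}\,\xi(d)^{\log_k(n/d)}\bigr)$, which merely splits the single quantity $(\xi(d)\,k)^{L}$ into its prefactor $k^{L}=n/d$ and its exponential part $\xi(d)^{L}$.

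Finally I would confirm the overhead does not change the order. The total overhead is bounded by $\sum_{\ell=0}^{L-1}(\xi(d)\,k)^{\ell}\,\xi(d)\,c(N_\ell)$; since each $c(N_\ell)$ is at most polynomial in $n$ and $m$ while $\xi(d)\,k>1$ makes the geometric sum dominated by its last term, this remains within the stated bound once per-call costs are treated (as in the surrounding text) as cheap relative to the exponential exploration factor $\xi(d)^{\log_k(n/d)}$.

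I expect the main obstacle to be cleanly separating the two multiplicative effects at each level — the branching factor $k$ from decomposition versus the loop factor $\xi(d)$ from value selection — and in particular ensuring the base case contributes only $O(1)$ rather than an extra $\xi(d)$, so that the exponent is $\log_k(n/d)$ and not $\log_k(n/d)+1$. The secondary subtlety is justifying the depth $\log_k(n/d)$ despite the $-d$ per level and the boundary case where a subproblem has fewer than $kd$ variables; both are handled by the fixed-point argument above and absorbed into the $O(\cdot)$.
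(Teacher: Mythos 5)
Your proposal is correct and follows essentially the same route as the paper: both set up the recurrence $T(n)=\xi(d)\left[k\,T\!\left(\frac{n-d}{k}\right)+O(n)\right]+O(n)$ with base case at $T(d)$, drop the $-d$ as asymptotically negligible, and unroll to $\left(k\,\xi(d)\right)^{\log_k(n/d)}$ plus a dominated overhead sum, rewriting $k^{\log_k(n/d)}$ as $n/d$. Your extra care about the recursion depth under the $-d$ shift and about the base case contributing $O(1)$ rather than an additional factor of $\xi(d)$ only makes explicit what the paper's proof assumes implicitly.
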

\begin{proof}
Assuming that $m$ is of the same 
order as $n$, the recurrence relation for RDIS is 
$T(n) = O(n) + \xi(d)\left[g(d) + O(m) + O(n) + O(d\log^2(n)) + k~T\left(\frac{n-d}{k}\right)\right]$, 
which can be simplified to $T(n) = \xi(d)\left[k~T\left(\frac{n}{k}\right) + O(n)\right] + O(n)$.
Noting that
the recursion halts at $T(d)$, the solution to the above recurrence relation is then
${T(n) = c_1~\left(k~\xi(d)\right)^{log_k{(n/d)}} + c_2~n \sum_{r=0}^{log_k{(n/d)}-1}{\xi(d)^r}}$.
which is $O\left((k\,\xi(d))^{\log_k{(n/d)}}\right) = O\left(\frac{n}{d} \xi(d)^{\log_k{(n/d)}}\right)$.
\end{proof}

%\begin{proof}
%The recurrence relation for RDIS is \\
%\begin{align}
%T(n) &= c_1n + \xi(d)\left[g(d) + c_2m + c_3n + d\log^2(n) + k~T\left(\frac{n-d}{k}\right)\right]\\
%        &= \xi(d)\left[k~T\left(\frac{n}{k}\right) + O(n)\right] + O(n),\\
%\text{and solving gives} \nonumber \\
%T(n) &= c_5\left(k~\xi(d)\right)^{log_k{n}} + c_6\sum_{r=1}^{log_k{n}}{\xi(d)^r n} \nonumber \\
%       &= O(n~\xi(d)^{log_k{n}}).
%%T(n) &= O(n \xi(d)^{log_k{\frac{n}{d}}}).
%\label{eqn:rdiscomplexity}
%\end{align}
%\end{proof}

\subsection{Convergence}
\label{app:convergence}

In the following, we refer to the basin of attraction of the global minimum as the global basin. Formally, we define
a basin of attraction as follows.
%We define the basin of attraction of a stationary point as the set of points $x_b \in \mathbb{R}^n$ from which
%DR initialized at $x_b$ will converge to that stationary point and refer to the basin of attraction of the global minimum
%as the global basin.
\begin{define}
The basin of attraction of a stationary point $\mb{c}$ is the set of points $B \subseteq \mathbb{R}^n$ for which the 
sequence generated by DR, initialized at $\x^0 \in B$, converges to $\mb{c}$.
\end{define}
Intuitively, at each level of recursion, RDIS with $\epsilon = 0$ partitions $\x$ into $\left\{\x_C,\x_U\right\}$, sets 
values using the subspace optimizer $\rho_C$ for $\x_C$, globally optimizes $f|_{\rho_C}(\x_U)$ by 
recursively calling RDIS, and repeats.
When the non-restart steps of the subspace optimizer satisfy
two practical conditions (below)
of sufficient decrease in (1) the objective function (a standard Armijo condition) and (2) the gradient 
norm over two successive partial updates,
(i.e., conditions (3.1) and (3.3) of~\citeNP{bonettini2011inexact}), 
then this process is equivalent to the $2$-block inexact Gauss-Seidel method (2B-IGS) described
in~\citeNP{bonettini2011inexact}
(c.f.,~\citeNP{grippo1999globally}; 
~\citeNP{cassioli2013convergence}), 
%grippo2000convergence,
%cassioli2013convergence}), 
and each limit point of the sequence generated by RDIS is a stationary 
point of $f(\x)$, of which the global minimum is one, and reachable through restarts. 
%with restarts, one of these stationary points will be the global minimum.
%the standard Armijo condition for decrease in the objective function, 
%this iterative process, 

%Our results make use of existing work on block coordinate descent methods, namely the inexact nonlinear 
%Gauss-Seidel (GS) method from~\cite{bonettini2011inexact} (also relevant 
%are~\cite{grippo1999globally,grippo2000convergence}), for which each limit point is a stationary 
%point of $f(x)$ when there are only $2$ blocks, if the steps on each block meet a set of practical conditions 
%that express the sufficient
%accuracy level of an approximate solution in terms of a decrease in the objective function (the standard Armijo condition)
%and an improvement of the optimality (i.e., a sufficient decrease in the gradient norm over two successive partial
%updates). With this result in mind, we first show that if the subspace optimizer satisfies these 
%conditions, then the steps taken by RDIS will satisfy these conditions and RDIS will converge to a stationary point
%of the 

Formally, let superscript $r$ indicate the recursion level, with $0 \leq r \leq d$, with $r = 0$ the top, and
%and $r = d \leq n$ the maximum recursion depth. 
recall that $\x_U^{(r)} = \left\{ \x_C^{(r+1)}, \x_U^{(r+1)} \right\}$ if there is no decomposition. 
The following proofs focus on the no-decomposition case for clarity; however, the extension to the 
decomposable case is trivial since each sub-function of the decomposition is independent. 
We denote applying the subspace optimizer to $f(\x)$ until the stopping criterion is reached as $S_*(f,\x)$
and a single call to the subspace optimizer as $S_1(f, \x)$ and note that $S_*(f,\x)$, 
by definition, returns the global minimum $\x^*$ and that repeatedly calling $S_1(f,\x)$ is equivalent
to calling $S_*(f,\x)$.

%*** We assume, for simplicity, that the global basin is reachable within a 
%finite number of restarts (e.g., by restarting on a finite grid for a Lipschitz smooth function) but this 
%argument could be made probabilistically as well. 
%
%*** Finally, we assume that both single and groups of 
%multiple (non-restart) steps taken by subspace optimizer satisfy conditions (3.1) and (3.3) 
%from~\cite{bonettini2011inexact}. 

For convenience, we restate conditions (3.1) and (3.3) from~\citeNP{bonettini2011inexact} (without constraints) on 
the sequence $\{\x^{(k)}\}$ generated by an iterative algorithm on blocks $\x_i$ for $i = 1,\dots,m$, respectively as \\
%\[f( \x_1^{(k+1)}, \dots, \x_i^{(k+1)}, \dots, \x_m^{(k)} ) \leq \\
%    f( \x_1^{(k+1)}, \dots, \x_i^{(k)} + \lambda_i^{(k)}\textbf{\emph{d}}_i^{(k)}, \dots, \x_m^{(k)} ),~~~~~~~~~~~\condOne\] 
%
\hsize=\columnwidth
\begin{dmath*}
f( \x_1^{(k+1)}, \dots, \x_i^{(k+1)}, \dots, \x_m^{(k)} ) \leq
    f( \x_1^{(k+1)}, \dots, \x_i^{(k)} + \lambda_i^{(k)}\textbf{\emph{d}}_i^{(k)}, \dots, \x_m^{(k)} ),~~~~~~~~~~~\condOne
\end{dmath*}    
where $\lambda_i^{(k)}$ is computed using Armijo line search and $\textbf{\emph{d}}_i^{(k)}$ is a feasible 
descent direction, and
\begin{dgroup*}
\begin{dmath*}
||\nabla_i f( \x_1^{(k+1)}, \dots, \x_i^{(k+1)}, \dots, \x_m^{(k)} )|| \\ \leq
    \eta|| \nabla_i f( \x_1^{(k+1)}, \dots, \x_{i-1}^{(k+1)}, \dots, \x_m^{(k)} ) || \condition*{i = 1,\dots,m}
\end{dmath*}
\begin{dmath*}
||\nabla_i f( \x_1^{(k+1)}, \dots, \x_i^{(k+1)}, \dots, \x_m^{(k+1)} )|| \\ \leq
    \eta || \nabla_{i-1} f( \x_1^{(k+1)}, \dots, \x_{i-1}^{(k+1)}, \dots, \x_m^{(k)} ) || \condition*{i = 2,\dots,m}
\end{dmath*}
\begin{dmath*}
||\nabla_1 f( \x_1^{(k+2)}, \dots, \x_i^{(k+1)}, \dots, \x_m^{(k+1)} )|| \\ \leq
    \eta^{1-m} || \nabla_{m} f( \x_1^{(k+1)}, \dots, \x_m^{(k+1)} ) ||,  ~~~~~~~~~~~\condTwo
\end{dmath*}
\end{dgroup*}
where $\eta \in [0,1)$ is a forcing parameter. See~\citeNP{bonettini2011inexact} for further details. The inexact 
Gauss-Seidel method is defined as every method that generates a sequence such that these conditions
hold and is guaranteed to converge to a critical point of $f(\x)$ when $m=2$. 
%For convenience, we separate
%the proof of Theorem~\ref{thm:convergence2} into the proof of Lemma~\ref{lem:convergence2} and 
%the full proof of the theorem. 
Let \RDIS{DR} refer to RDIS$(f,\x,\x^0,S=\text{DR},\epsilon = 0)$.

\begin{prop}
%If the non-restart steps of RDIS satisfy 
%\condOne~and \condTwo, $\epsilon = 0$, $n$ is the number of variables,
%the volume of the global basin is $v^n$, and the volume of the entire space is $V^n$,
%then \RDIS{DR} returns the global minimum after $t$ restarts, 
%with probability $1 - (1-(v/V)^n)^t$.
If the non-restart steps of RDIS satisfy 
\condOne~and \condTwo, $\epsilon = 0$, the number of variables is $n$,
the volume of the global basin is $v = l^n$, and the volume of the entire space is $V = L^n$,
%and the probability of starting in the global basin is $p$, 
then \RDIS{DR} returns the global minimum after $t$ restarts, 
with probability $1 - (1-(v/V))^t$.
\end{prop}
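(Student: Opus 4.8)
The plan is to split the argument into a deterministic convergence part and a probabilistic counting part, and then combine them using the fact that RDIS retains the best value found across all restarts.

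First, I would invoke the convergence result established immediately above: because the non-restart steps satisfy \condOne{} and \condTwo{} and $\epsilon = 0$, the sequence generated by \RDIS{DR} coincides with that of the $2$-block inexact Gauss--Seidel method of~\citeNP{bonettini2011inexact}, so every limit point of the sequence is a stationary point of $f(\x)$. The key deterministic claim is then: if a top-level restart initializes \RDIS{DR} at a point $\x^0$ lying in the global basin $B$, the generated sequence converges to the global minimum $\x^*$. To establish this I would use that \condOne{} (the Armijo condition) makes $f$ monotonically non-increasing along the sequence, so the iterates remain trapped inside $B$ (leaving $B$ would require ascending across its boundary); since $\x^*$ is the unique stationary point reachable by descent from within $B$, the limit point supplied by the Gauss--Seidel convergence result must be $\x^*$.

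Second, I would set up the probabilistic model. Each top-level restart draws $\x^0$ uniformly and independently from the space of volume $V = L^n$. The probability that a single restart lands in the global basin is $v/V = (l/L)^n$. Because the restarts are independent Bernoulli trials with success probability $p = v/V$, the probability that none of the $t$ restarts lands in $B$ is $(1 - v/V)^t$, and hence the probability that at least one does is $1 - (1 - v/V)^t$. Third, I would combine the two parts: since RDIS records and returns the best value found over all restarts (the update $f^* \gets f^*_\rho$ in Algorithm~\ref{alg:rdis}), it returns the global minimum exactly when at least one restart converges to $\x^*$. By the deterministic claim this happens whenever at least one restart lands in $B$, which by the count above occurs with probability $1 - (1 - v/V)^t$, giving the stated result.

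The main obstacle I anticipate is the deterministic claim that initializing inside the global basin forces \RDIS{DR} --- rather than plain DR --- to converge to $\x^*$. The subtlety is that the global basin is defined through the trajectory of DR in the full space, whereas \RDIS{DR} performs block-coordinate (Gauss--Seidel) descent over the recursively chosen partitions, so its iterates need not follow DR's path. Making the claim rigorous requires showing that this block-coordinate descent, under the monotone-decrease guarantee of \condOne{}, stays within $B$ and cannot converge to any stationary point other than $\x^*$; reconciling the two notions of basin is the most delicate point of the proof.
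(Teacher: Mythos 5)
Your overall strategy --- reduce the deterministic part to the $2$-block inexact Gauss--Seidel convergence result of~\citeNP{bonettini2011inexact}, then layer a Bernoulli argument over the top-level restarts --- is the same as the paper's, and your probabilistic step and the final combination match the paper's Step 3 essentially verbatim. The gap is in how you obtain the 2B-IGS equivalence. You assert that ``the sequence generated by \RDIS{DR} coincides with that of the $2$-block inexact Gauss--Seidel method,'' but that is precisely the claim that needs proof, and it is nontrivial because \RDIS{DR} is recursive: at level $r$ the update of the block $\x_U^{(r)}$ is not a single descent step but an entire nested run of \RDIS{DR} on $f|_{\rho_C}^{(r)}(\x_U)$, itself containing further partitions, simplifications, and restarts. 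The paper handles this by induction on recursion depth: the base case is the deepest level, where RDIS just runs the subspace optimizer to completion and hence returns the exact global minimum of its sub-function; the inductive step then observes that the recursive call on $\x_U^{(r)}$ returns the exact global minimum of $f|_{\rho_C}^{(r)}(\x_U)$, i.e., it is an \emph{exact} Gauss--Seidel step on that block (which by definition satisfies \condOne~and \condTwo), while the $S_1$ updates on $\x_C^{(r)}$ satisfy \condOne~and \condTwo~by hypothesis, so each level is a genuine two-block IGS process. Without some version of this induction your argument only covers a single level of recursion.

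By contrast, the obstacle you single out as most delicate --- that the basin of attraction is defined via the trajectory of plain DR, while \RDIS{DR} follows a block-coordinate trajectory that need not remain in the same basin --- is a legitimate worry, and the paper's own proof does not resolve it either: it simply asserts that a restart landing in the global basin will converge to the global minimum. Your proposed fix (monotone decrease under \condOne~traps the iterates in $B$) does not work as stated, since basins of attraction are not sublevel sets and descent steps can cross basin boundaries; but identifying the issue is to your credit, as it is glossed over in the paper.
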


\begin{proof}
\textbf{Step 1.}
Given a finite number of restarts, one of which starts in the global basin, then 
\RDIS{DR}, with no recursion, returns the global minimum and satisfies 
\condOne~and \condTwo. This can be seen as follows.

At $r=0$, \RDIS{DR}
chooses $\x_C^{(0)} = \x^0$ and $\x_U^{(0)} = \emptyset$
and repeatedly calls $S_1(f^{(0)},\x_C^{(0)})$. This is equivalent to calling $S_*(f^{(0)},\x_C^{(0)}) = S_*(f,\x)$, 
which returns the global minimum ${\x^*}$. Thus, \RDIS{DR} returns 
the global minimum. Returning the global minimum corresponds to a step in the exact 
Gauss-Seidel algorithm,
which is a special case of the IGS algorithm and, by definition, satisfies \condOne~and \condTwo.

\textbf{Step 2.}
Now, if the non-restart steps of $S_1(f,\x)$ satisfy  
\condOne~and \condTwo, then \RDIS{DR} returns the global minimum. We show this
by induction on the levels of recursion.

\emph{Base case.} From Step 1, 
we have that \RDIS{DR}$(f^{(d)}, \x^{(d)})$ returns the global minimum and satisfies
\condOne~and \condTwo, since \RDIS{DR} does not recurse beyond this level. \\
\emph{Induction step.} Assume that \RDIS{DR}$(f^{(r+1)},\x^{(r+1)})$ returns the global minimum. We now show that
\RDIS{DR}$(f^{(r)},\x^{(r)})$ returns the global minimum.
\RDIS{DR}$(f^{(r)}, \x^{(r)})$ first partitions $\x^{(r)}$ into the two blocks $\x_C^{(r)}$ and 
$\x_U^{(r)}$ and then iteratively takes 
the following two steps: $\rho_C^{(r)} \leftarrow S_1(f|_{\sigma^*_U}^{(r)}(\x_C^{(r)}))$ and 
%\RDIS{DR}$(f^{(r)}_{\x_U^{(r)}}, \x_U^{(r)})$. 
$\rho_U^{(r)} \leftarrow$ \RDIS{DR}$(  f|_{\rho_C}^{(r)}(\x_U)  )$.
The first simply calls the subspace optimizer on $\rho_C^{(r)}$. The 
second is a recursive call equivalent to \RDIS{DR}$(f^{(r+1)},\x^{(r+1)})$, which, 
from our inductive assumption, returns
the global minimum $\rho_U^{(r)} = \x_U^{(r)^*}$ of $f|_{\rho_C}^{(r)}(\x_U)$ and satisfies \condOne~and \condTwo.
%However, from our induction assumption,
%this recursive call returns the global minimum and thus corresponds to a step in the exact Gauss-Seidel algorithm,
%which is a special case of the IGS algorithm and, by definition, satisfies \condOne~and \condTwo.
For $S_1(f|_{\sigma^*_U}^{(r)}(\x_C^{(r)}))$, \RDIS{DR} will never restart the subspace optimizer 
unless the sequence it is generating
converges. Thus, for each restart, since there are only two blocks and both the 
non-restart steps of $S_1( f|_{\sigma^*_U}^{(r)}(\x_C^{(r)}) )$
and the \RDIS{DR}$(  f|_{\rho_C}^{(r)}(\x_U) )$ steps satisfy \condOne~and \condTwo~then 
\RDIS{DR} is a 2B-IGS
method and the generated sequence converges to the stationary point
of the current basin. At each level, after converging, \RDIS{DR} will restart, iterate until convergence, 
and repeat for a finite number of restarts, one of which will start in the global basin and thus converge to 
the global minimum, which is then returned. 

\textbf{Step 3.}
Finally, 
since the probability of \RDIS{DR} starting in the global basin is $(v/V)$, then the probability of it not starting
in the global basin after $t$ restarts is $(1-(v/V))^t$. 
%and the probability of \RDIS{DR} starting
%in the global basin after $t$ restarts is simply $1 - (1-p^n)^t$. 
From above, we have that \RDIS{DR} will return the global minimum if it starts in the global basin, 
thus \RDIS{DR} will return the global minimum after $t$ restarts with probability $1-(1-(v/V))^t$.
\end{proof}

\section{RDIS Subroutine Details}

\subsection{Variable Selection}
In hypergraph partitioning, the 
goal is to split the graph into $k$ components of approximately equal size while minimizing the 
number of hyperedges cut. Similarly, in order to maximize decomposition, RDIS should
choose the smallest block of variables that, when assigned, decomposes the remaining 
variables. Accordingly, RDIS constructs a hypergraph $H = (V,E)$ with a vertex for each 
term, $\{n_i \in V : f_i \in f \}$ and a hyperedge for each variable, $\{e_j \in E : x_j \in \x\}$, 
where each hyperedge $e_j$ connects to all vertices $n_i$ for which the corresponding 
term $f_i$ contains the variable $x_j$. Partitioning $H$, the resulting cutset will be the smallest 
set of variables that need to be removed in order to decompose the hypergraph. And since 
assigning a variable to a constant effectively removes it from the optimization (and the hypergraph), 
the cutset is exactly the set that RDIS chooses on line 2. 

\subsection{Execution time}
Variable selection typically occupies only a tiny fraction of the runtime of RDIS, with the vast majority of
RDIS' execution time spent computing gradients for the subspace optimizer. A small, but non-negligible 
amount of time is spent maintaining the component graph, but this is much more efficient than if we were to
recompute the connected components each time, and the exponential gains from decomposition are well worth 
the small upkeep costs.

\section{Experimental Details}
\label{app:experiments}

All experiments were run on the same compute cluster. Each computer in the cluster 
was identical, with two 2.33GHz quad core Intel Xeon E5345 processors and 16GB of RAM.
Each algorithm was limited to a single thread.

%* 2 Quad Core Intel Xeon E5345
%* 2x4MB Cache
%* 2.33GHz
%* 1333MHz FSB
%* 16GB 667MHz (8x2GB), Dual Ranked DIMMs (n47 and n48: 32GB 667MHz (8x2GB), Dual Ranked DIMMs)
%* 80GB 7.2K RPM Serial ATA 3Gbps 3.5-in HotPlug Hard Drive
%* 750GB 7.2K RPM Universal SATA 3Gbps 3.5-in HotPlug Hard Drive

\subsection{Structure from Motion}

In the structure from motion task (bundle adjustment~\cite{triggs00bundle}), 
the goal is to minimize the 
error between a dataset of points in a 2-D image and a projection of fitted 3-D points representing a scene's 
geometry onto fitted camera models. The variables are the parameters of the cameras and the positions of 
the points and the cameras. This problem is highly-structured in a global sense: cameras only interact 
explicitly with points, creating a bipartite graph structure that RDIS is able to exploit. The dataset 
used is the 49-camera, 7776-point data file from the Ladybug 
dataset~\cite{agarwal2010bundle}, where the number of points is scaled
proportionally to the number of cameras used (i.e., if half the 
cameras were used, half of the points were included). There are $9$ variables per camera and
$3$ variables per point.

\subsection{Highly Multimodal Test Function}
\label{app:testfunc}

The test function is defined as follows. Given a height $h$, 
a branching factor $k$, and a maximum arity $a$, we define a 
complete $k$-ary tree of variables of the specified height, with $x_0$ as the root. 
For all paths $p_j \in P$ in the tree of length $l_j \leq a$, with $l_j$ even, 
we define a term $t_{p_j} =\prod_{x_i \in p_j} \sin(x_i)$. 
The test function is
$f_{h,k,a}(x_0, \dots, x_n) = \sum_{i=1}^n c_0 x_i + c_1 x_i^2 + c_2 \sum_{P} t_{p_j}$. 
The resulting function is a multidimensional sinusoid placed in the 
basin of a quadratic function parameterized by $c_1$, with a linear slope 
defined by $c_0$. The constant $c_2$ controls the amplitude of the sinusoids. 
For our tests, we used $c_0 = 0.6, c_1 = 0.1,$ and $c_2 = 12$.
A 2-D example of this function is shown in Figure~\ref{fig:testfuncplot}. 
We used a tree height of $h = 11$, with branching factor $k = 2$, 
resulting in a function of $4095$ variables. We evaluated each of the 
algorithms on functions with terms of arity $a \in \{ 4, 8, 12 \}$, where a larger arity 
defines more complex dependencies between variables as well as more 
terms in the function. 
The functions for the three different arity levels had 16372, 24404, and 30036 terms, respectively.

Figure~\ref{fig:trajectories} shows the value of the current state for
each algorithm over its entire execution. These are the trajectories for Figure~3 of the main paper.

\begin{figure}[t]
\includegraphics[width=0.8\columnwidth]{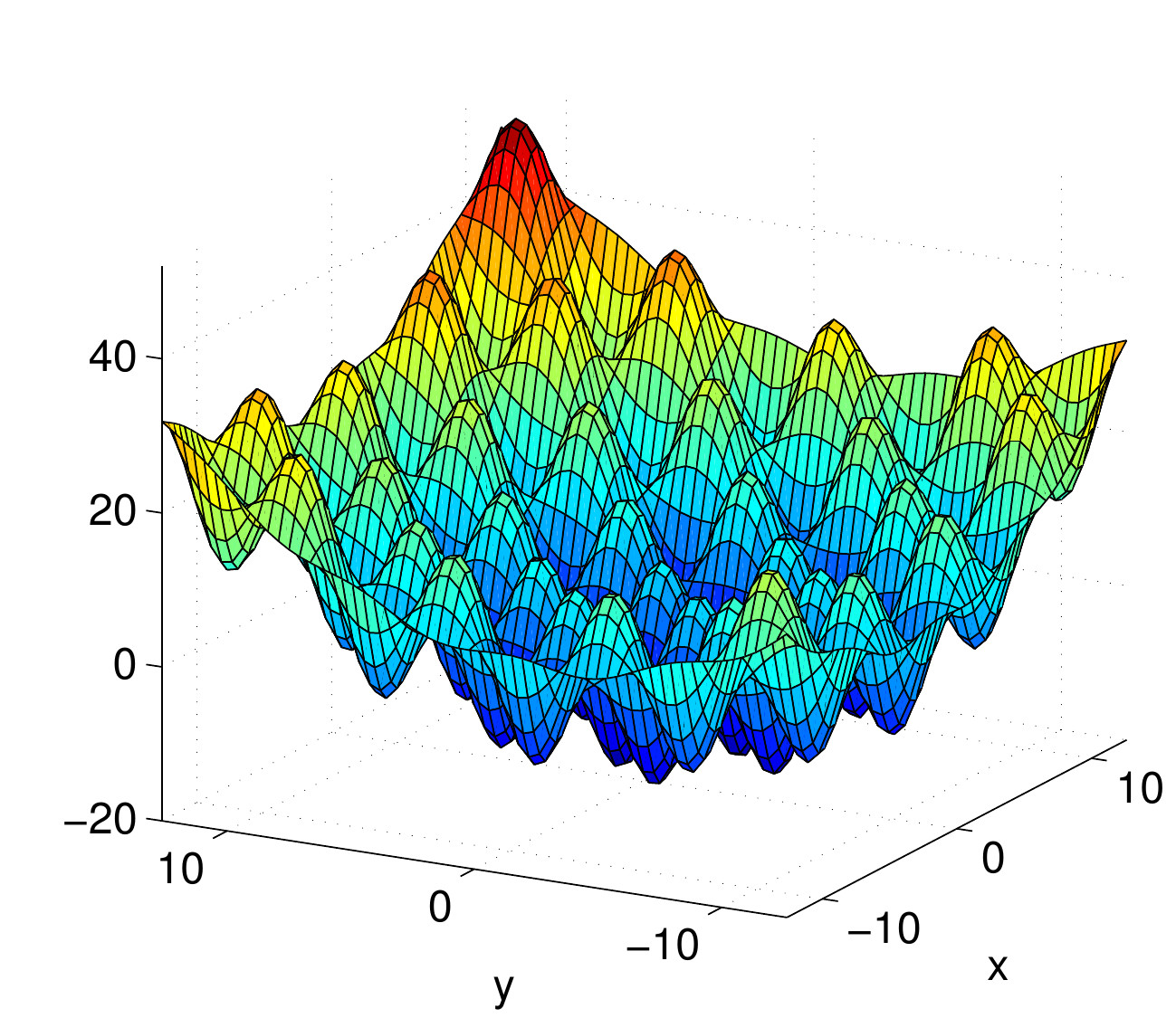}
\caption{\small{A 2-D example of the highly multimodal test function.}
}
\label{fig:testfuncplot}
\end{figure} 

\begin{figure}[h]
%\vskip 0.2in
%\begin{center}
%\centerline{
\includegraphics[width=\columnwidth]{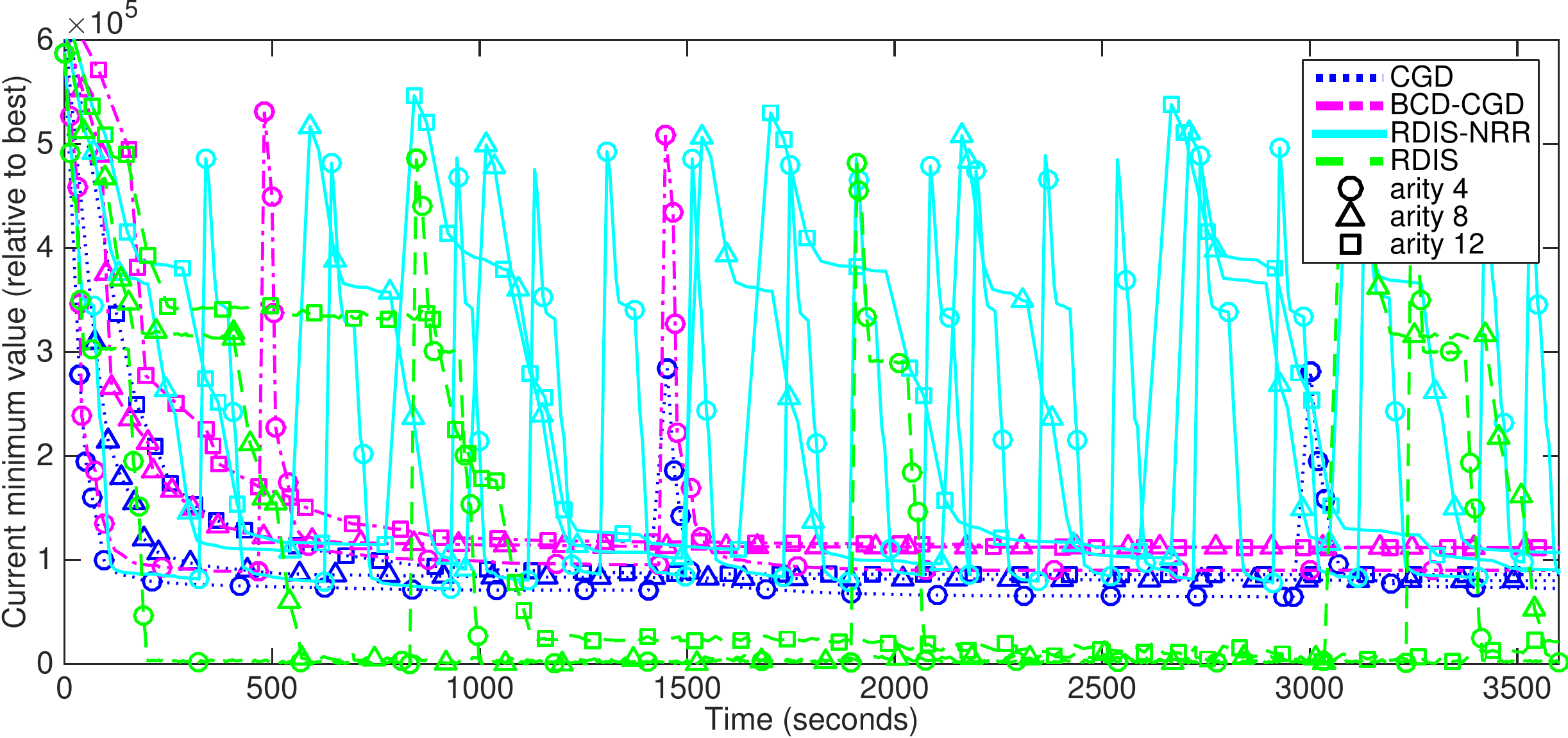}
%}
\caption{\small{
Trajectories on the test function for the data in Figure~3 in the main paper. Sharp rises show restarts. Notably, 
RDIS-NRR restarts much more often than the other algorithms because decomposition allows it to move through
the space much more efficiently. Without internal restarting it gets stuck at the same local minima
as BCD-CGD and CGD. For arity 12, RDIS never performs a full restart and still
finds the best minimum, despite using the same initial point as the other algorithms.
}}
\label{fig:trajectories}
%\end{center}
\vskip -0.2in
\end{figure} 

\subsection{Protein Folding}

\newcommand{\goodchi}{\protect\raisebox{2pt}{$\chi$}}

%\subsubsection{Effect of $\epsilon$ on the performance of RDIS}
%
%\begin{figure}[b]
%%\vskip 0.2in
%%\begin{center}
%%\centerline{
%\includegraphics[width=\columnwidth]{figures/proteins_epsilon.pdf}
%%}
%\caption{\small{RDIS-NRR's minimum energy found and total time taken versus epsilon, on 
%protein 3EEQ (x-axis is log scale). 
%%Each datapoint is the minimum found after running RDIS-NRR from the same $20$ unique initial points.
%%%We have added 1500 to all energies so that all bars point in a consistent direction, in order to help 
%%%with interpreting the figure and comparing between algorithms, since the relative values are what are important.
%%%The energies shown have all had 1500 added to them so that the bars are all in a consistent direction.
%}}
%\label{fig:epsproteins}
%%\end{center}
%%\vskip -0.2in
%\end{figure} 
%
%Figure~\ref{fig:epsproteins} demonstrates the effect of $\epsilon$ on RDIS in protein folding. It
%shows the performance of RDIS-NRR as a function of 
%$\epsilon$, where performance is measured by both minimum energy found and time taken. Each point on the 
%energy curve is the minimum energy found over $20$ restarts. Each point on the time
%curve is the total time taken for all $20$ restarts. As $\epsilon$ increases, time decreases because
%more local structure is being exploited (simplification and decomposition). In addition, minimum energy actually
%decreases, until it starts to increase when $\epsilon$ reaches $100$. We attribute this to the smoothing caused
%by increased simplification, allowing RDIS to avoid minor local minima in the objective function. 

\subsubsection{Problem details}

Protein folding~\cite{anfinsen73,baker00} is the process by which a protein, consisting of a long chain of amino acids, assumes its functional shape. The computational problem is to predict this final conformation given a known sequence of amino acids. This requires minimizing an energy function consisting mainly of a sum of pairwise distance-based terms representing chemical bonds, hydrophobic interactions, electrostatic forces, etc., where, in the simplest case, the variables are the relative angles between the atoms. The optimal state is typically quite compact, with the amino acids and their atoms bonded tightly to one another and the volume of the protein minimized. Each amino acid is composed of a backbone segment and a sidechain, where the backbone segment of each amino acid connects to its neighbors in the chain, and the sidechains branch off the backbone segment and form bonds with distant neighbors. The sidechain placement task is to predict the conformation of the sidechains when the backbone atoms are fixed in place. 

Energies between amino acids are defined by the Lennard-Jones potential function, as specified in the Rosetta protein folding library~\cite{leaver2011rosetta3}. The basic form of this function is $E_{LJ}(r) = \frac{A}{r^{12}} - \frac{B}{r^6}$, where $r$ is the distance between two atoms and $A$ and $B$ are constants that vary for different types of atoms. The Lennard-Jones potential in Rosetta is modified slightly so that it behaves better when $r$ is very large or very small. The full energy function is $E(\phi) = \sum_{\phi} E_{jk}( R_j(\goodchi_j), R_k(\goodchi_k) )$, where $R_j$ is an amino acid (also called a residue) in the protein, $\phi$ is the set of all torsion angles, and $\phi_i \in \goodchi_j$ are the angles for $R_j$. Each residue has between zero and four torsion angles that define the conformation of its sidechain, depending on the type of amino acid. The terms $E_{jk}$ compute the energy between pairs of residues as $E_{jk} = \sum_{a_j}\sum_{a_k} E_{LJ}(r(a_j(\goodchi_j),a_k(\goodchi_k)))$, where $a_j$ and $a_k$ refer to the positions of the atoms in residues $j$ and $k$, respectively, and $r(a_j,a_k)$ is the distance between the two atoms. The torsion angles define the positions of the atoms through a series of kinematic relations, which we do not detail here.

%Figure~2 shows the results of RDIS, block coordinate descent with random restarts (BCD), and conjugate gradient descent with random restarts (CGD) after each algorithm was run for $48$ hours on each of the 21 proteins. 
%The proteins are organized from smallest number of terms to largest number of terms, which roughly corresponds to the complexity of optimization. 
The smallest (with respect to the number of terms) protein (ID 1) has $131$ residues, $2282$ terms, and $257$ variables, while the largest (ID 21) has $440$ residues, $9380$ terms, and $943$ variables. The average number of residues, terms, and variables is $334$, $7110$, and $682$, respectively. The proteins with their IDs from the paper are as follows:
(1) 4JPB,
(2)    4IYR,
(3)    4M66,
(4)    3WI4,
(5)    4LN9,
(6)    4INO,
(7)    4J6U,
(8)    4OAF,
(9)    3EEQ,
(10)    4MYL,
(11)    4IMH,
(12)    4K7K,
(13)    3ZPJ,
(14)    4LLI,
(15)    4N08,
(16)    2RSV,
(17)    4J7A,
(18)    4C2E,
(19)    4M64,
(20)    4N4A,
(21)    4KMA.

\end{appendices}

\end{document}